\documentclass[twoside]{article}

\usepackage[preprint]{aistats2027}
\usepackage{amsmath,amssymb,amsthm,mathtools}
\usepackage{bm}
\usepackage{booktabs}
\usepackage{enumitem}
\usepackage{graphicx}
\usepackage{algorithm}
\usepackage{algpseudocode}
\usepackage{hyperref}
\usepackage[authoryear,round]{natbib}
\usepackage{xcolor}
\usepackage{tikz}
\usetikzlibrary{arrows.meta,positioning,fit,calc}

\hypersetup{
  colorlinks=true,
  linkcolor=blue!55!black,
  citecolor=blue!55!black,
  urlcolor=blue!55!black
} 

\newtheorem{assumption}{Assumption} 
\newtheorem{proposition}{Proposition}
\newtheorem{lemma}[proposition]{Lemma}
\newtheorem{theorem}[proposition]{Theorem}

\newcommand{\E}{\mathbb{E}}
\newcommand{\Prob}{\mathbb{P}}
\newcommand{\ind}[1]{\mathbf{1}\{#1\}}
\newcommand{\R}{\mathbb{R}} 
\newcommand{\calD}{\mathcal{D}}
\newcommand{\calL}{\mathcal{L}}
\newcommand{\bfeta}{\bm{\eta}}
\newcommand{\bftheta}{\bm{\theta}}

\newcommand{\mc}[2]{\ensuremath{#1\,(\mathbin{\pm}#2)}}

\newcommand{\papertitle}{SIRNET: Causal Adaptation of Pretrained Embeddings for Semiparametric Treatment-Effect Estimation Under Surrogacy}


\begin{document}
\runningtitle{SIRNET: Semiparametric Treatment-Effect Estimation}

\twocolumn[
\aistatstitle{\papertitle}
\aistatsauthor{Lei Shi \And Qiuran Lyu}
\aistatsaddress{Adobe Research\\\texttt{leis@adobe.com}
  \And Division of Biostatistics\\UC Berkeley\\\texttt{qiuran\_lyu@berkeley.edu}}
] 

\begin{abstract}
Modern experiments increasingly produce rich post-treatment objects---such as
text, images, and video---that may provide early surrogate signals for outcomes that
are delayed, expensive, or observed only for a subset of participants.  A
pretrained encoder can turn such an object into a strong generic embedding, but
the embedding is not trained for the estimation tasks required by a
causal estimator and may emphasize information irrelevant to the causal target.
We propose SIRNET, a \underline{S}urrogate-\underline{I}nformed \underline{R}epresentation adaptation framework for neural-\underline{NET}work-based causal estimation in the presence of unstructured post-treatment surrogates. Starting from pretrained surrogate
embeddings, SIRNET learns a shared adapter jointly for outcome prediction, treatment propensity, and arm-specific surrogate marginalization, beyond the generic pretraining objective. The resulting covariate and surrogate representations provide a common interface for these nuisance tasks. The final treatment-effect estimate and confidence interval are obtained from cross-fitted orthogonal scores that attain the semiparametric efficiency bound under the stated nuisance-rate conditions.  We also validate SIRNET's performance in synthetic and
semisynthetic experiments, as well as the Anti-Amyloid in Asymptomatic
Alzheimer's Disease (A4) trial, using frozen pretrained brain-MRI embeddings
as surrogates for a long-term outcome. 
\end{abstract}

\section{Introduction}

\paragraph{Background.} Many modern experiments collect rich post-treatment objects for every unit but observe the primary outcome only later or only for a costly subset. These objects can serve as early signals for the primary outcome, and are referred to as surrogates \citep{prentice1989surrogate, athey2026surrogate}. In an online
education experiment \citep{kestin2025ai, pardos2024chatgpt}, for example, treatment may be an AI-assisted feedback
tool, the surrogate an essay written during the course, and the primary outcome
a final standardized test score that could be missing for students who leave
the online course or skip testing. In a clinical trial, treatment may be a new
therapy, early MRI scans may be surrogates, and a later clinical or imaging
endpoint may be unavailable because of dropout, death, or a missed visit \citep{Sperling2014A4}.  Such
text, image, and multimodal objects are far richer than classical scalar
surrogates.  Although pretrained encoders can summarize them effectively for
their source task, the resulting embeddings do not necessarily emphasize the information
needed for causal estimation. In particular, generic embedding directions can be
predictive of visual or linguistic content while being irrelevant, or even
distracting, for the nuisance functions of a causal estimator \citep{veitch2020adapting,ShiBleiVeitch2019Dragonnet}.

This paper studies how to adapt pretrained embeddings of unstructured
surrogates in a single sample with partially observed outcomes: all units in
one sample have baseline covariates $X$, treatment $W$, and a post-treatment
object $S$, but the outcome $Y$ is
partially observed, indicated by a response variable $R$.  The goal is the average treatment effect
\[
  \tau = \E[Y(1)-Y(0)].
\]
The challenge for this problem is three-fold. First, the surrogate $S$ is \emph{post-treatment}:
$S = S(W)$ is the realized potential surrogate under the unit's assigned
arm.  Hence, estimators that condition on the observed $(X_i,S_i)$ without accounting 
for the arm-specific potential-surrogate distribution will be biased. Second, the surrogates are unstructured and high-dimensional. A pretrained embedding
$S$ can be predictive in general while retaining or emphasizing
variation irrelevant to causal effect estimation. Third, given that the outcome is only partially observed, the surrogate can be informative both for predicting the outcome and for modeling its observation.  Without properly accounting for these factors, treatment-effect estimates can be biased and inefficient.

\paragraph{Contributions.} In this paper, we propose SIRNET, a Surrogate-Informed Representation Network for treatment-effect estimation that addresses these challenges. Our central claim is that \emph{multitask causal representation adaptation from pretrained embeddings for unstructured surrogates can improve nuisance estimation for a semiparametric estimator}.  SIRNET learns a shared adaptation of the baseline covariates and post-treatment embeddings for the nuisance functions required by the orthogonal score.  Its outcome, treatment, labeling, and marginal-outcome heads train that adaptation jointly, so the learned coordinates are selected for the causal nuisance tasks rather than the generic pretraining objective.  The final treatment-effect estimate is obtained from cross-fitted orthogonal scores, not a direct neural-network contrast, yielding robustness to model misspecification and statistically valid inference under certain conditions. From a theoretical perspective, we also give sufficient conditions connecting representation-level nuisance approximation to existing semiparametric inference theory. Moreover, we validate SIRNET using extensive synthetic and semisynthetic experiments, as well as a real-world study on the Anti-Amyloid in Asymptomatic Alzheimer's Disease
(A4) study \citep{Sperling2014A4}, which is a Phase~3 randomized trial of solanezumab
in cognitively unimpaired older adults with elevated brain amyloid. The experiments demonstrate improvements in nuisance learning and causal estimation precision compared with baseline methods that do not leverage such surrogate-informed representation adaptation.


\section{Related Work}
\label{sec:related}

\paragraph{Surrogate endpoints and the surrogate index.}
\citet{prentice1989surrogate} formalised the notion of a surrogate endpoint as one that fully mediates the treatment effect on the primary outcome, which is called the \emph{Prentice criterion}. Subsequent work proposed alternative notions and validation criteria, including the proportion of treatment effect explained \citep{freedman1992statistical}, individual- and trial-level measures of surrogacy \citep{buyse1998criteria}, and causal formulations based on potential outcomes and principal stratification \citep{frangakis2002principal}. More recently, 
\citet{athey2026surrogate} propose the \emph{surrogate index}: a
scalar summary $\hat{g}(S)$ trained on a reference population where $Y$
is observed, used as a proxy for $Y$ in a target experiment. 
\cite{KallusMao2024LimitedOutcomeData} study the role of surrogates in the efficient estimation of treatment effects with limited outcome data, which motivates our setup, and derive the efficient influence function for this setting. But the estimator is not directly equipped to adapt generic embeddings through multitask nuisance learning.  SIRNET builds on their work by using the efficient score as its estimation target and adapting pretrained embeddings for all nuisance functions required by the score.

\paragraph{Partially observed outcomes and missing-data methods.} 
Estimating treatment effects when the outcome is missing for some units is a
classical problem.  Complete-case analysis and regression imputation works only under strong assumptions \citep{little2019statistical}.  Inverse-probability-weighted (IPW)
estimators \citep{Rosenbaum1983Propensity} reweight the complete cases by
the reciprocal of the observed-outcome probability $\Pr(R=1\mid W,X)$.
Doubly robust (DR) augmented-IPW estimators
\citep{BangRobins2005DoublyRobust} are consistent if either the outcome
model or the selection model is correct.  \citet{KallusMao2024LimitedOutcomeData} derive the efficient
influence function for treatment effect estimation with limited outcome data, where the
surrogate $S$ can enter both the outcome model and the selection model. They also
show that the efficient score strictly dominates the $X$-only AIPW estimator
whenever $S$ is informative about $Y$ conditional on $X$.  SIRNET uses this
efficient score as its estimation target, with a deep network providing the
nuisance estimates.


\paragraph{Deep learning for treatment-effect estimation.}
Several neural architectures use deep learning to improve treatment-effect estimation. Early work by \citet{johansson2016learning} framed counterfactual prediction as a representation-learning problem. Building on this idea, \citet{ShalitJohanssonSontag2017TARNet} proposed TARNet and CFRNet, which learn a shared representation before splitting into treatment-specific outcome heads. Related approaches use alternative representation-learning objectives, including preservation of local similarity \citep{yao2018representation}, propensity-matched minibatch construction \citep{schwab2018perfect}, and adversarial counterfactual generation \citep{yoon2018ganite}. \citet{ShiBleiVeitch2019Dragonnet} augment a shared outcome representation with a treatment-propensity head and targeted regularisation designed specifically for treatment-effect estimation. More recently, architectures such as FlexTENet allow both shared and treatment-specific components \citep{curth2021inductive}. These methods focus on covariates that are pre-treatment but do not address post-treatment surrogates. A separate line of work learns representations of short-run surrogates for long-run effect estimation. \citet{ChengGuoLiu2020SurrogateRepresentation} learn a sequential surrogate representation designed to accommodate time-varying confounding under a temporal-unconfoundedness assumption. \citet{cai2024long} instead learn latent surrogate representations from observed surrogates and proxies of potentially unobserved surrogates using an identifiable variational autoencoder. These works share SIRNET's motivation of tailoring representations to the causal target rather than using raw surrogate measurements unchanged, but address different identification problems and inferential targets.

\section{Setup and Methodology Foundation}

\subsection{Setup}
\label{sec:setup}

For units $i=1,\ldots,n$, let
\[
  O_i = (X_i,W_i,S_i,R_i,R_iY_i)
\]
denote the observed data.  Here $X_i\in\mathcal{X}$ are pre-treatment
covariates, $W_i\in\{0,1\}$ is a binary treatment, $S_i$ is the
post-treatment object, and
$R_i\in\{0,1\}$ indicates whether the primary outcome is observed. Our methodology development relies on the following assumptions. 

\begin{assumption}[Ignorability and overlap]
\label{ass:ignorability}
Treatment satisfies
\[
  (Y_i(1),Y_i(0),S_i(1),S_i(0)) \perp W_i \mid X_i,
\]
and there exists $\epsilon>0$ such that $\epsilon < e(X_i) < 1-\epsilon$, where
$e(x)=\Prob(W_i=1\mid X_i=x)$.
\end{assumption}

Assumption \ref{ass:ignorability} is standard in causal inference.  It requires that the treatment assignment is independent of the potential outcomes and potential surrogates, given the pre-treatment covariates, which usually holds in randomized experiments.  The overlap condition ensures that there is a positive probability of receiving either treatment for all values of the covariates, which is necessary for estimating the treatment effect across the entire population. These are standard assumptions in causal inference with surrogate outcomes \citep{athey2026surrogate, chen2023semiparametric}, and they are necessary for identifying the average treatment effect.

\begin{assumption}[Conditional missing at random]
\label{ass:missing-at-random}
Outcome labeling is conditionally ignorable:
\[
  R_i \perp Y_i \mid W_i,X_i,S_i,
\]
with $\epsilon < r(W_i,X_i,S_i) < 1-\epsilon$, where
$r(w,x,s)=\Prob(R_i=1\mid W_i=w,X_i=x,S_i=s)$. 
\end{assumption}

The second assumption is the key identifying condition for partial outcome
observation \citep{KallusMao2024LimitedOutcomeData}. This assumption is more debatable in general because it is not guaranteed by randomization.  It requires that the outcome observation mechanism is independent of the outcome itself, conditional on the treatment, covariates, and surrogate. It is an extension of the conditional missing at random (MAR) assumption in missing data literature \citep{little2019statistical}, which allows for the outcome to be missing in a way that depends on observed variables. One argument for its plausibility in the presence of surrogates is that the surrogate $S$ is a rich post-treatment object that may capture information relevant to both the outcome and the outcome observation. Without it, the effect is generally not identifiable and a separate sensitivity analysis framework is needed to address the challenge, which is beyond the scope of the current paper. 

\subsection{Methodological Foundation: Influence Function and Orthogonal Score}

In this section, we briefly review the orthogonal score established by \citet{KallusMao2024LimitedOutcomeData}. Define the nuisance functions
\begin{align*}
  \tilde\mu(w,x,s)
    &= \E[Y_i\mid W_i=w,X_i=x,S_i=s,R_i=1],\\
  \mu(w,x)
    &= \E[\tilde\mu(w,X_i,S_i)\mid W_i=w,X_i=x],\\
  e(x)
    &= \Prob(W_i=1\mid X_i=x),\\
  r(w,x,s)
    &= \Prob(R_i=1\mid W_i=w,X_i=x,S_i=s).
\end{align*}
Let $\bfeta=(\tilde\mu,\mu,e,r)$.  Two familiar identification strategies
motivate the score.  Outcome regression first predicts the observed outcome
from $(W,X,S)$ and then averages that prediction over the surrogate
distribution within each arm:
$\tau=\E[\mu(1,X)-\mu(0,X)]$.  Alternatively, inverse-probability weighting
recovers each arm mean by upweighting units who received that arm and whose
outcome was observed:
\[
  \tau
  =\E\!\left[
    \frac{W R Y}{e(X)r(1,X,S)}
    -\frac{(1-W)R Y}{\{1-e(X)\}r(0,X,S)}
  \right].
\]
The first strategy can be sensitive to outcome-model error, whereas the second
can be unstable when treatment or labeling probabilities are small.  The
efficient influence function combines them.  Specifically, the
efficient score is  
\begin{equation}
  {%
\small
\begin{aligned}
  &\psi(O_i;\tau,\boldsymbol{\eta})\\
  &=
  \mu(1,X_i)-\mu(0,X_i)-\tau
  \\
  &\quad+
  \frac{W_i\left\{
    \frac{R_i
      \bigl(Y_i-\tilde\mu(1,X_i,S_i)\bigr)}{r(1,X_i,S_i)}
    + \tilde\mu(1,X_i,S_i)-\mu(1,X_i)
  \right\}}{e(X_i)}
  \\
  &\quad-
  \frac{(1-W_i)\left\{
    \frac{R_i
      \bigl(Y_i-\tilde\mu(0,X_i,S_i)\bigr)}{r(0,X_i,S_i)}
    + \tilde\mu(0,X_i,S_i)-\mu(0,X_i)
  \right\}}{1-e(X_i)}.
\end{aligned}
}%
\label{eq:orthogonal-score}
\end{equation}
The first line on the right-hand side is the
baseline-adjusted outcome-regression contrast.  Within each arm, the term
$\tilde\mu-\mu$ corrects for how the realized post-treatment surrogate differs
from its arm-specific average, and $R(Y-\tilde\mu)/r$ corrects the surrogate
prediction using the outcomes that are actually observed.  The treatment
weights place both corrections on a common experimental population. The function $\mu(w,x)$ integrates
$\tilde\mu$ over the arm-$w$-specific distribution of the surrogate.  This
marginalization is essential because $S_i=S_i(W_i)$ is post-treatment: the
distribution of $S_i$ observed for treated units $(W_i=1)$ differs from the
distribution that would be observed for the same units under control. 

Under Assumptions \ref{ass:ignorability} and \ref{ass:missing-at-random}, we have
$\E[\psi(O_i;\tau_0,\bfeta_0)]=0$, where $\tau_0$ and $\bfeta_0$ denote the true values.  Moreover, $\psi(O;\tau_0,\bfeta_0)$ is the efficient
influence function for the ATE in the nonparametric observed-data model under Assumptions \ref{ass:ignorability} and \ref{ass:missing-at-random}, as shown by
\citet[Theorem~2.1]{KallusMao2024LimitedOutcomeData}.  Consequently, the
semiparametric efficiency bound is
\begin{equation}
  V^*
  = \operatorname{Var}\!\left\{\psi(O;\tau_0,\bfeta_0)\right\}
  = \E\!\left[\psi(O;\tau_0,\bfeta_0)^2\right].
  \label{eq:efficiency-bound}
\end{equation}
Thus every
regular asymptotically linear estimator has asymptotic variance at least
$V^*$, while an estimator with influence function $\psi$ attains this bound. Moreover, Proposition \ref{prop:score-robustness} states that the score is robust to certain misspecifications of the nuisance functions, providing conditions under which the estimator remains consistent. In particular, when the treatment propensity is known or consistently
estimated, the score is consistent if either the labeling model or the
observed-outcome regression is correct in each arm; the marginal regression
$\mu^\star$ need not itself be correct for unbiasedness in that case.

\section{SIRNET Details}

\subsection{Architecture}
\label{sec:architecture}

Figure~\ref{fig:sirnet-architecture} shows how SIRNET maps the observed data to
the nuisance functions required by Equation~\eqref{eq:orthogonal-score}. It is a structured
multitask learner for the particular collection of nuisances in that score: a
shared adapter lets related prediction tasks inform common embedding
directions, while separate heads and input routes enforce the statistical role
of each nuisance.  The architecture has six blocks.

\textbf{(i) Baseline encoder.}  A neural map converts baseline covariates to
$B_i=b_\theta(X_i)$. All downstream nuisance heads use $B_i$ rather than the raw covariate vector.\textbf{(ii) Surrogate adapter.}  A second map adapts a pretrained post-treatment
  embedding and baseline information to $H_i=g_\theta(B_i,S_i)$.  When $S_i$
  is represented by a vector of AI scores or pretrained embeddings, $g_\theta$ is a
  multilayer adapter that selects and recombines embedding directions for the
  causal nuisance tasks.  
  \textbf{(iii) Treatment head.}  A propensity head estimates
  $\hat e_i=e_\theta(B_i)$.  Under a known
  randomized design this head can be replaced by the known assignment
  probability; estimating it can also provide an auxiliary representation-balancing task. \textbf{(iv) Labeling head.}  A missingness head estimates
  $\hat r_i(w)=r_\theta(w,B_i,H_i)$, the probability of observing the outcome
  given treatment, baseline variables, and the surrogate representation. \textbf{(v) Observed-outcome heads.}  Two treatment-specific heads estimate
  $\hat{\tilde\mu}_i(w)=\tilde\mu_\theta(w,B_i,H_i)$ for $w\in\{0,1\}$.
  These heads are trained only on units with $R_i=1$ but evaluated for all
  units in the orthogonal score. \textbf{(vi) Marginal heads.}  Two heads estimate
  $\hat\mu_i(w)=\mu_\theta(w,B_i)$ for $w\in\{0,1\}$.
  These heads take only baseline covariates as input, omitting the realized
  surrogate $S_i$, thereby approximating the arm-$w$-specific marginal
  expectation of $\tilde\mu_w$ over the distribution of $S_i(w)$ given $X_i$. 


\begin{figure*}[htbp!]
  \centering
  \resizebox{0.8\textwidth}{!}{\begin{tikzpicture}[
  font=\small,
  >=Latex,
  node distance=6mm and 12mm,
  box/.style={
    draw,
    rounded corners=2pt,
    align=center,
    minimum height=9mm,
    text width=28mm,
    fill=white
  },
  widebox/.style={
    draw,
    rounded corners=2pt,
    align=center,
    minimum height=11mm,
    text width=36mm,
    fill=white
  },
  head/.style={
    draw,
    rounded corners=2pt,
    align=center,
    minimum height=9mm,
    text width=36mm,
    fill=white
  },
  stage/.style={
    font=\footnotesize\bfseries,
    align=center
  },
  arrow/.style={->, line width=0.45pt},
  softarrow/.style={->, line width=0.45pt, dashed, gray!75}
]

\node[stage] (stage1) at (0,1.15) {Observed data};
\node[stage, right=18mm of stage1] (stage2) {Representation};
\node[stage, right=28mm of stage2] (stage3) {Nuisance heads};
\node[stage, right=31mm of stage3] (stage4) {Causal layer};

\node[box, fill=blue!7, below=7mm of stage1] (x) {$X_i$\\baseline};
\node[box, fill=green!8, below=7mm of x] (z) { $S_i$\\post-treatment surrogate};
\node[box, fill=gray!8, below=7mm of z] (obs) {$W_i,\ R_i,\ R_iY_i$\\observed labels};

\node[widebox, fill=blue!6, right=of x] (bx) {baseline encoder\\$B_i=b_\theta(X_i)$};
\node[widebox, fill=green!8, right=of z] (hx) {surrogate encoder\\$H_i=g_\theta(B_i,S_i)$};

\node[head, fill=orange!10, right=16mm of bx] (e) {treatment propensity\\$\hat e(B_i)$};
\node[head, fill=orange!10, below=6mm of e] (mu) {marginal index\\$\hat\mu(0,B_i),\ \hat\mu(1,B_i)$};
\node[head, fill=purple!8, below=6mm of mu] (r) {labeling propensity\\$\hat r(0,B_i,H_i)$\\$\hat r(1,B_i,H_i)$};
\node[head, fill=purple!8, below=6mm of r] (yt) {observed-outcome index\\$\hat{\tilde\mu}(0,B_i,H_i)$\\$\hat{\tilde\mu}(1,B_i,H_i)$};

\node[widebox, fill=red!7, right=19mm of mu] (score) {orthogonal ATE layer\\$\hat\tau_{\mathrm{SIRNET}}$};
\node[widebox, fill=gray!8, below=8mm of score] (se) {influence values\\$\hat\psi_i$ and standard error};

\node[widebox, fill=yellow!10, below=12mm of yt, text width=45mm] (loss) {training objective\\outcome + labeling + treatment + marginalization losses};

\draw[arrow] (x) -- (bx);
\draw[arrow] (x.east) -- ++(5mm,0) |- (hx.west);
\draw[arrow] (z) -- (hx);

\draw[arrow] (bx) -- (e);
\draw[arrow] (bx.east) -- ++(6mm,0) |- (mu.west);
\draw[arrow] (hx.east) -- ++(6mm,0) |- (r.west);
\draw[arrow] (hx.east) -- ++(6mm,0) |- (yt.west);

\draw[arrow] (e.east) -- ++(7mm,0) |- (score.west);
\draw[arrow] (mu) -- (score);
\draw[arrow] (r.east) -- ++(7mm,0) |- (score.west);
\draw[arrow] (yt.east) -- ++(7mm,0) |- (score.west);
\draw[arrow] (score) -- (se);

\node[draw, rounded corners=2pt, fit=(bx)(hx), inner sep=3mm,
  label={[font=\footnotesize]above:shared encoders}] {};
\node[draw, rounded corners=2pt, fit=(e)(mu)(r)(yt), inner sep=3mm,
  label={[font=\footnotesize]above:nuisance estimates $\hat\eta$}] (nuisancegroup) {};

\draw[softarrow] (obs.east) -- ++(7mm,0) |- (loss.west);
\draw[softarrow] (loss.north) -- ++(0,4mm) -| (nuisancegroup.south);

\end{tikzpicture}}
  \caption{SIRNET architecture.
  A shared surrogate adapter feeds
  the observed-outcome and labeling-propensity heads, while a baseline encoder
  feeds the treatment and marginal-outcome heads.  The orthogonal ATE
  layer implements equation~\eqref{eq:sirnet-estimator}.}
  \label{fig:sirnet-architecture}
\end{figure*}

\subsection{Training Objective}

The network is trained by combining predictive losses for each nuisance
function.  Let $\bftheta$ denote all network parameters.  The objective is
\begin{equation}
  \small 
\begin{aligned}
  &\calL(\bftheta)\\
  &=
  \lambda_Y
  \frac{1}{\sum_i R_i}
  \sum_{i=1}^n
    R_i
    \left\{Y_i-\hat{\tilde\mu}_{\bftheta}(W_i,B_i,H_i)\right\}^2
  \nonumber\\
  & +
  \lambda_R
  \frac{1}{n}\sum_{i=1}^n
    \ell_{\mathrm{ce}}\!\left(R_i,\hat r_{\bftheta}(W_i,B_i,H_i)\right)
  +
  \lambda_W
  \frac{1}{n}\sum_{i=1}^n
    \ell_{\mathrm{ce}}\!\left(W_i,\hat e_{\bftheta}(B_i)\right)
  \nonumber\\
  & +
  \lambda_M\sum_{w\in\{0,1\}}
  \frac{
  \sum_{W_i=w}\left\{
    \operatorname{sg}\bigl(\hat{\tilde\mu}_{\bftheta}(w,B_i,H_i)\bigr)-\hat\mu_{\bftheta}(w,B_i)
  \right\}^2}{\sum_i\ind{W_i=w}}
  \label{eq:training-loss}
\end{aligned}
\end{equation}
where $\operatorname{sg}(\cdot)$ denotes stop-gradient.
The first three terms train the outcome, labeling, and treatment nuisance
functions.  The fourth term enforces the further marginalization of the outcome heads by matching $\mu_\theta(w,B_i)$ to a stop-gradiented
copy of $\tilde\mu_\theta(w,B_i,H_i)$, which prevents this loss from updating the outcome-head-specific parameters through its direct pseudo-outcome, but instead build a joint shared-encoder optimization with a detached pseudo-outcome. This choice follows the broader paradigm of learning from detached, model-generated targets. In representation learning, BYOL \citep{grill2020bootstrap} and SimSiam \citep{chen2021exploring} use stop-gradient to prevent direct feedback through prediction targets. Related ideas arise in recursive estimation, including fitted Q-iteration \citep{ernst2005tree} and, more recently, DeepBlip \citep{ma2025deepblip}, which employs detached predictions in recursive neural estimation of causal blip functions. The causal parameter is estimated only after training, by evaluating the cross-fitted orthogonal score.

\subsection{Estimation and Inference}
\paragraph{Orthogonal Estimator and Inference.}

After fitting the neural network, the plug-in orthogonal estimator is
\begin{equation}
  \tiny
\begin{aligned}
  &\hat\tau_{\mathrm{SIRNET}}\nonumber\\
  &=
  \frac{1}{n}\sum_{i=1}^n
  \biggl[
  \hat\mu(1,\hat B_i)-\hat\mu(0,\hat B_i)
  \nonumber\\
  &+
  \frac{W_i
  \left\{
    \frac{R_i}{\hat r(1,\hat B_i,\hat H_i)}
      \bigl(Y_i-\hat{\tilde\mu}(1,\hat B_i,\hat H_i)\bigr)
    + \hat{\tilde\mu}(1,\hat B_i,\hat H_i)-\hat\mu(1,\hat B_i)
  \right\}}{\hat e(\hat B_i)}
  \nonumber\\
  &-
  \frac{(1-W_i)
  \left\{
    \frac{R_i}{\hat r(0,\hat B_i,\hat H_i)}
      \bigl(Y_i-\hat{\tilde\mu}(0,\hat B_i,\hat H_i)\bigr)
    + \hat{\tilde\mu}(0,\hat B_i,\hat H_i)-\hat\mu(0,\hat B_i)
  \right\}}{1-\hat e(\hat B_i)}
  \biggr].
  \label{eq:sirnet-estimator}
\end{aligned}
\end{equation}
For inference, define the estimated influence value
$\hat\psi_i=\psi(O_i;\hat\tau_{\mathrm{SIRNET}},\hat\bfeta_i)$ and
$\bar\psi=n^{-1}\sum_i\hat\psi_i$.  The influence-function standard error is
\begin{equation}
  \widehat{\mathrm{se}}(\hat\tau_{\mathrm{SIRNET}})
  =
  \left\{
    \frac{1}{n(n-1)}\sum_{i=1}^n(\hat\psi_i-\bar\psi)^2
  \right\}^{1/2},
  \label{eq:sirnet-se}
\end{equation}
and an asymptotic $100(1-\alpha)\%$ confidence interval is
$\hat\tau_{\mathrm{SIRNET}}\pm
z_{1-\alpha/2}\widehat{\mathrm{se}}(\hat\tau_{\mathrm{SIRNET}})$.

Moreover, in Appendix~\ref{app:more-estimation}, we introduce additional estimation strategies: regression adjustment and propensity-score weighting, which are complementary to the plug-in orthogonal estimator described above.

\paragraph{Cross-fitting.}

In practice, nuisance networks should be estimated with cross-fitting
\citep{ChernozhukovEtal2018DML}: split the data into $K$ folds, train SIRNET
on all but one fold, predict nuisance functions on the held-out fold, and
compute the estimator only from out-of-fold predictions (Algorithm~\ref{alg:cross-fitted-sirnet}).
Cross-fitting separates nuisance training from score evaluation and allows the
use of flexible machine learning while retaining the orthogonal-score
interpretation. 

\subsection{Theoretical Guarantees}
The oracle result of \citet{KallusMao2024LimitedOutcomeData} gives efficient
inference for any cross-fitted nuisances meeting its product-rate conditions.
Appendix~\ref{app:theory} gives conditional rates for all seven composite
nuisances with a trainable shared encoder, without requiring coordinate
recovery. Its supervised-head bound uses one optimization gap for the joint
supervised objective, rather than separate empirical optimality for each
head. The marginal-head bound additionally requires a final-fit comparison
against fixed generated targets over candidate baseline encoders. Neither
gap follows from gradient-based training or certifies the fixed architectures
used in our experiments.
\section{Numerical Validation}

\subsection{Monte Carlo Validation}
\label{sec:mc}

In this section, we evaluate the proposed methods on synthetic data to verify the statistical
properties.

\paragraph{Data Generating Process}

We use two nonlinear DGPs, both with $n=1000$ to conduct synthetic experiments.
The first uses a construction in which the nuisances have a common nonlinear state along with task-specific components and a protected selection anchor. The second raises the dimensionality and compositional complexity of the nuisance functions and also adds a latent component in the outcome model. The details of the two DGPs can be found in Appendix \ref{app:mc-dgps}.

\paragraph{Results on Estimation and Inference}

We use $5$-fold cross-fitting. The reported methods are complete-case
Difference-in-Means; XGBoost regression imputation with $X$ alone (RI-X) or the realized
$(X,S)$ (RI-XS); the pooled surrogate-prediction contrast (Pooled-Surr);
X-only inverse-probability weighting (IPCW-DiM); the surrogate-only score
with XGBoost or SIRNET nuisances; $X$-only and full-surrogate Double Machine Learning with XGBoost
nuisances; and the full SIRNET score. Appendix~\ref{app:mc-estimators}
defines these estimators. 
Every full-score method estimates the treatment propensity. Imputation and
surrogate-only intervals use paired-bootstrap refits; full-score intervals use
influence-function sandwich standard errors. Table~\ref{tab:mc-main} reports
bias, empirical SD, RMSE, nominal 95\% coverage, and power at the
design-specific evaluation point.

\begin{table*}[ht!]
\centering
\caption{Monte Carlo results. Bias, empirical SD, and RMSE are shown as
estimate ($\pm$ one Monte Carlo standard error). Cov.\ is nominal 95\%
coverage; Pwr.\ is power at the design-specific evaluation point.}
\label{tab:mc-main}
\scriptsize
\setlength{\tabcolsep}{2pt}
\resizebox{\textwidth}{!}{%
\begin{tabular}{@{}lrrrrr@{\quad}rrrrr@{}}
\toprule
& \multicolumn{5}{c}{\textbf{Nonlinear shared+private} ($\tau=0$)}
& \multicolumn{5}{c}{\textbf{Deep bounded-latent} ($\tau=.7$)} \\
\cmidrule(lr){2-6}\cmidrule(lr){7-11}
Method & Bias & SD & RMSE & Cov. & Pwr$_{0.8}$
       & Bias & SD & RMSE & Cov. & Pwr$_{1.0}$ \\
\midrule
\multicolumn{11}{l}{\textit{Complete case}} \\
\quad DiM
  & \mc{-0.100}{0.019} & \mc{0.230}{0.013} & \mc{0.250}{0.014} & $0.927$ & $0.840$
  & \mc{-0.295}{0.022} & \mc{0.275}{0.016} & \mc{0.403}{0.020} & $0.853$ & $0.680$ \\
\midrule
\multicolumn{11}{l}{\textit{Imputation (bootstrap SE)}} \\
\quad RI-X [XGB]
  & \mc{-0.071}{0.018} & \mc{0.224}{0.013} & \mc{0.234}{0.014} & $0.907$ & $0.920$
  & \mc{-0.486}{0.013} & \mc{0.156}{0.009} & \mc{0.511}{0.012} & $0.140$ & $0.793$ \\
\quad RI-XS [XGB]
  & \mc{-0.858}{0.013} & \mc{0.160}{0.009} & \mc{0.872}{0.013} & $0.000$ & $0.160$
  & \mc{-1.049}{0.012} & \mc{0.144}{0.008} & \mc{1.058}{0.012} & $0.000$ & $0.553$ \\
\quad Pooled-Surr [XGB]
  & \mc{+0.786}{0.012} & \mc{0.151}{0.009} & \mc{0.800}{0.012} & $0.000$ & $1.000$
  & \mc{+0.422}{0.015} & \mc{0.187}{0.011} & \mc{0.461}{0.015} & $0.567$ & $1.000$ \\
\midrule
\multicolumn{11}{l}{\textit{IPW only}} \\
\quad IPCW-DiM [XGB]
  & \mc{-0.127}{0.029} & \mc{0.358}{0.021} & \mc{0.379}{0.024} & $0.813$ & $0.873$
  & \mc{-0.299}{0.024} & \mc{0.296}{0.017} & \mc{0.420}{0.021} & $0.780$ & $0.767$ \\
\midrule
\multicolumn{11}{l}{\textit{Partial DR (surrogate path only)}} \\
\quad Surr-Only [XGB]
  & \mc{-0.039}{0.018} & \mc{0.222}{0.013} & \mc{0.225}{0.013} & $0.913$ & $0.953$
  & \mc{+0.495}{0.017} & \mc{0.212}{0.012} & \mc{0.538}{0.017} & $0.280$ & $1.000$ \\
\quad Surr-Only [SIRNET]
  & \mc{-0.008}{0.013} & \mc{0.165}{0.010} & \mc{\textbf{0.164}}{\textbf{0.009}} & $0.953$ & $0.880$
  & \mc{-0.196}{0.017} & \mc{0.206}{0.012} & \mc{0.284}{0.014} & $0.947$ & $0.740$ \\
\midrule
\multicolumn{11}{l}{\textit{Doubly robust}} \\
\quad DML-Xonly [XGB]
  & \mc{-0.006}{0.037} & \mc{0.453}{0.026} & \mc{0.451}{0.048} & $0.960$ & $0.480$
  & \mc{+0.097}{0.029} & \mc{0.357}{0.021} & \mc{0.369}{0.021} & $0.947$ & $0.867$ \\
\quad DML-XS [XGB]
  & \mc{+0.126}{0.029} & \mc{0.509}{0.021} & \mc{0.523}{0.141} & $0.980$ & $0.020$
  & \mc{+0.019}{0.021} & \mc{0.362}{0.015} & \mc{0.362}{0.016} & $0.963$ & $0.480$ \\
\quad SIRNET
  & \mc{-0.069}{0.019} & \mc{0.227}{0.013} & \mc{0.237}{0.013} & $0.933$ & $0.880$
  & \mc{-0.067}{0.022} & \mc{0.266}{0.015} & \mc{\textbf{0.274}}{\textbf{0.016}} & $0.953$ & $0.887$ \\
\bottomrule
\end{tabular}%
}
\end{table*}

Table~\ref{tab:mc-main} shows that SIRNET-based estimators provide the
strongest overall performance across the two DGPs. In the first DGP,
Surr-Only [SIRNET] has the smallest RMSE, reflecting the efficiency gain from
the surrogate path when its outcome bridge is well estimated. In the deeper
selection-linked DGP, the full SIRNET score has the smallest RMSE among the
reported methods, whereas Surr-Only retains big bias. The other methods
either have high bias (for example, RI-XS and Pooled-Surr) or high variance
(for example, IPCW-DiM and DML-Xonly). These results illustrate both the value
of surrogate-informed nuisance learning and the role of the full
outcome-residual correction with propensity weighting when the surrogate-only bridge is imperfect.

\subsection{Semisynthetic Validation}
\label{sec:semisynthetic}

We complement the fully synthetic experiments with two semisynthetic designs that are calibrated to a real clinical trial. The Anti-Amyloid in Asymptomatic Alzheimer's Disease (A4) Study \citep{Sperling2014A4} was a Phase~3 randomized controlled trial that evaluated solanezumab, a humanized monoclonal antibody targeting soluble amyloid-$\beta$, in cognitively unimpaired older adults with elevated cortical amyloid burden on PET imaging.  The trial enrolled 1,169 participants (578 assigned to solanezumab and 591 to placebo). Our analysis sample contains 1,129 participants (552 treated and 577 control): it excludes the 40 participants without an eligible post-treatment composite surrogate at either of the two prespecified surrogate visits.  

We complement the fully synthetic experiments with two semisynthetic designs
that are calibrated to the above real A4 study. In both designs, the $n=1129$ observed
triples $(X_i,W_i,S_i)$ are held fixed and only $(Y_i,R_i)$ are redrawn. The
baseline input $X_i$ contains the standardized tabular variables used in the
application and the first five principal components of the $1,488$-dimensional
baseline Swin embedding \citep{Liu2021Swin}. The post-treatment input $S_i$ contains the first 20
principal components of the 1,489-dimensional composite surrogate embedding.More details on the two semisynthetic designs are provided in Appendix~\ref{app:semisynthetic-dgps}. We use the same
estimator grid and five-fold cross-fitting as in Table~\ref{tab:mc-main}. The
SIRNET and fixed-profile strengthened-panel rows use $n_{\mathrm{sim}}=150$
replications at each signal level. The deep-panel classical and XGBoost rows,
as well as both fold-local DML-XS reruns, use $n_{\mathrm{sim}}=300$. The
results are reported in Table~\ref{tab:semisynthetic}.

\begin{table*}[ht!]
\centering
\caption{Semisynthetic validation. Bias, empirical SD, and RMSE are shown as
estimate ($\pm$ one Monte Carlo standard error). Cov.\ is nominal 95\%
coverage; Pwr.\ is power at the design-specific evaluation point. }
\label{tab:semisynthetic}
\scriptsize
\setlength{\tabcolsep}{2pt}
\resizebox{\textwidth}{!}{%
\begin{tabular}{@{}lrrrrr@{\quad}rrrrr@{}}
\toprule
& \multicolumn{5}{c}{\textbf{Nonlinear shared+private A4} ($\tau=0$)}
& \multicolumn{5}{c}{\textbf{Deep bounded-latent A4} ($\tau=.7$)} \\
\cmidrule(lr){2-6}\cmidrule(lr){7-11}
Method & Bias & SD & RMSE & Cov. & Pwr$_{0.2}$
       & Bias & SD & RMSE & Cov. & Pwr$_{1.0}$ \\
\midrule
\multicolumn{11}{l}{\textit{Complete case}} \\
\quad DiM
  & \mc{-0.024}{0.004} & \mc{0.053}{0.003} & \mc{0.058}{0.003} & $0.953$ & $0.987$
  & \mc{-0.037}{0.003} & \mc{0.048}{0.002} & \mc{0.061}{0.002} & $0.930$ & $1.000$ \\
\midrule
\multicolumn{11}{l}{\textit{Imputation}} \\
\quad RI-X [XGB]
  & \mc{-0.096}{0.004} & \mc{0.052}{0.003} & \mc{0.109}{0.004} & $0.487$ & $0.980$
  & \mc{-0.097}{0.003} & \mc{0.047}{0.002} & \mc{0.107}{0.003} & $0.030$ & $1.000$ \\
\quad RI-XS [XGB]
  & \mc{-0.128}{0.004} & \mc{0.050}{0.003} & \mc{0.137}{0.004} & $0.267$ & $0.933$
  & \mc{-0.126}{0.003} & \mc{0.049}{0.002} & \mc{0.136}{0.003} & $0.003$ & $1.000$ \\
\quad Pooled-Surr [XGB]
  & \mc{-0.253}{0.001} & \mc{0.014}{0.001} & \mc{0.253}{0.001} & $0.000$ & $0.000$
  & \mc{-0.740}{0.001} & \mc{0.016}{0.001} & \mc{0.740}{0.001} & $0.000$ & $0.030$ \\
\midrule
\multicolumn{11}{l}{\textit{IPW only}} \\
\quad IPCW-DiM [XGB]
  & \mc{-0.013}{0.005} & \mc{0.059}{0.003} & \mc{0.060}{0.004} & $0.960$ & $0.987$
  & \mc{-0.014}{0.003} & \mc{0.054}{0.002} & \mc{0.056}{0.002} & $0.983$ & $1.000$ \\
\midrule
\multicolumn{11}{l}{\textit{Partial DR (surrogate path only)}} \\
\quad Surr-Only [XGB]
  & \mc{-0.110}{0.004} & \mc{0.050}{0.003} & \mc{0.121}{0.004} & $0.393$ & $0.900$
  & \mc{-0.107}{0.003} & \mc{0.050}{0.002} & \mc{0.118}{0.003} & $0.083$ & $1.000$ \\
\quad Surr-Only [SIRNET]
  & \mc{-0.025}{0.004} & \mc{0.055}{0.003} & \mc{0.060}{0.003} & $0.933$ & $0.987$
  & \mc{-0.046}{0.004} & \mc{0.051}{0.003} & \mc{0.069}{0.004} & $0.887$ & $1.000$ \\
\midrule
\multicolumn{11}{l}{\textit{Doubly robust}} \\
\quad DML-Xonly [XGB]
  & \mc{+0.006}{0.006} & \mc{0.068}{0.004} & \mc{0.068}{0.004} & $0.973$ & $0.967$
  & \mc{-0.002}{0.003} & \mc{0.060}{0.002} & \mc{0.060}{0.002} & $0.977$ & $1.000$ \\
\quad DML-XS [XGB]
  & \mc{+0.013}{0.004} & \mc{0.066}{0.003} & \mc{0.067}{0.003} & $0.970$ & $0.993$
  & \mc{+0.023}{0.003} & \mc{0.057}{0.002} & \mc{0.061}{0.003} & $0.977$ & $1.000$ \\
\quad SIRNET
  & \mc{-0.019}{0.004} & \mc{0.053}{0.003} & \mc{\textbf{0.056}}{\textbf{0.003}} & $0.967$ & $0.980$
  & \mc{-0.024}{0.004} & \mc{0.051}{0.003} & \mc{\textbf{0.056}}{\textbf{0.003}} & $0.960$ & $1.000$ \\
\bottomrule
\end{tabular}%
}
\end{table*}

From Table~\ref{tab:semisynthetic}, SIRNET has the smallest RMSE in the strengthened semisynthetic DGP ($0.056$ versus $0.067$ for DML-XS [XGB]), with low bias and near-nominal coverage.  SIRNET also has the smallest RMSE in the deep bounded-latent design ($0.056$), although the jointly fit, fold-local XGBoost pipeline is close ($0.061$) and likewise has low bias and near-nominal coverage.  Thus the A4-calibrated experiments show that both flexible nuisance-learning strategies perform well, with SIRNET retaining a modest RMSE advantage in both designs.

\section{Empirical Application: A4 Study}

\subsection{Study Overview}

In this section, we apply the method to the real A4 study. The outcome analyzed here is white matter hyperintensity (WMH) volume, a
measure of cerebrovascular burden visible on T2/FLAIR brain MRI.  WMH is an
established imaging biomarker associated with Alzheimer's-related
neurodegeneration and has been linked to both amyloid pathology and treatment
response in preclinical AD populations.  We define the analysis outcome as
$Y_i=\log(\text{WMH}_i+1)$ at month~66, where $\text{WMH}_i$ is WMH volume
in mm$^3$.  Of the 1129 analysis participants, 797 (70.6\%) had an observed
month~66 WMH measurement; 332 (29.4\%) were missing due to study dropout,
death, or missed visits.  Among these 332, 11 participants (6 solanezumab,
5 placebo) have a recorded death prior to month~66, accounting for a small
share ($3.3\%$) of missingness relative to withdrawal (172) and adverse
events (51); the remaining reasons are listed in
Appendix~\ref{app:a4-diagnostics}.

\subsection{Features and Surrogate Construction}

\textbf{Baseline covariates ($X$).}
Pre-treatment covariates include tabular clinical and demographic variables
(age, sex, years of education, APOE4 allele count, baseline composite amyloid
PET score, and baseline cognitive assessment scores) together with a
high-dimensional imaging representation derived from the screening visit brain
MRI.  The screening MRI is encoded by a frozen pretrained Swin Transformer
\citep{Liu2021Swin}.  Its embedding is compressed to 96 principal components
via PCA. 

\textbf{Post-treatment surrogate ($S$).}
We form a composite surrogate from the latest available eligible
post-treatment MRI visit: week~84 is preferred and week~12 is used as a
fallback.  An eligible visit requires both a Swin embedding and a WMH
measurement.  Of the 1,129 analysis participants, 971 (86.0\%) use the
week~84 visit and 158 (14.0\%) use the week~12 fallback.  The raw surrogate
contains the $1,488$-dimensional frozen embedding and $\log(\mathrm{WMH}+1)$ from the
assigned visit; it is compressed to $96$ principal components within each
training fold, after which SIRNET learns the task-specific surrogate adapter. These post-treatment scans precede the month~66 analysis
endpoint and may capture neurological changes related to later dropout and
outcome evolution, making conditional MAR more plausible, though it is still not testable.

\subsection{Results}

We compare the same set of estimators as in the Monte Carlo and semisynthetic
experiments, detailed in Appendix~\ref{app:mc-estimators}. Table~\ref{tab:a4-results}
shows that the adjusted estimators are broadly consistent with a null effect in
the selected analysis cohort. DiM is more negative ($\hat\tau=-0.092$), whereas
RI-X, the XGBoost DML estimators, and SIRNET are closer to zero and all of their
intervals include zero. Thus the selected-cohort analysis provides no evidence
of a statistically significant solanezumab effect on month-66 WMH volume.  For
the full SIRNET row, configuration selection is nested inside each outer
training fold: the resulting estimate is $-0.044$ (SE $0.035$), compared with
$-0.035$ (SE $0.035$) under the earlier globally fixed configuration.  Thus
the substantive conclusion is insensitive to protecting the outer held-out
scores from tuning.

This finding aligns well with the scientific report for the A4 study. The published main results
\citep{Sperling2023A4Results} found no statistically significant benefit of
solanezumab on PACC or on any secondary cognitive or imaging endpoint.  This
null finding is consistent with the programme of solanezumab trials in
symptomatic patients: both the EXPEDITION 1 and 2 trials in mild-to-moderate
Alzheimer's disease \citep{Doody2014EXPEDITION} and the subsequent
EXPEDITION~3 trial targeting mild dementia \citep{Honig2018EXPEDITION3}
likewise failed to demonstrate significant slowing of disease progression,
suggesting that passive amyloid immunotherapy at the tested doses does not
modify the downstream neurodegenerative cascade.

\begin{table}[htbp!]
\centering
\caption{Estimated ATE of solanezumab on $\log(\text{WMH}+1)$ at month~66.
  }
\label{tab:a4-results}
\begin{tabular}{@{}lrrr@{}}
\toprule
Method & $\hat\tau$ & SE & 95\% CI \\
\midrule
\multicolumn{4}{l}{\textit{Group 1: Complete case}} \\[2pt]
\quad DiM
  & $-0.092$ & $0.055$ & $[-0.200,\;+0.017]$ \\[3pt]
\multicolumn{4}{l}{\textit{Group 2: Imputation (bootstrap SE)}} \\[2pt]
\quad RI-X
  & $-0.011$ & $0.019$ & $[-0.048,\;+0.027]$ \\
\quad RI-XS
  & $-0.007$ & $0.018$ & $[-0.043,\;+0.029]$ \\
\quad Pooled-Surr
  & $-0.061$ & $0.040$ & $[-0.139,\;+0.018]$ \\[3pt]
\multicolumn{4}{l}{\textit{Group 3: IPW only}} \\[2pt]
\quad IPCW-DiM
  & $-0.178$ & $0.147$ & $[-0.466,\;+0.111]$ \\[3pt]
\multicolumn{4}{l}{\textit{Group 4: Partial DR, Surrogate Only}} \\[2pt]
\quad XGB
  & $-0.001$ & $0.021$ & $[-0.042,\;+0.041]$ \\
\quad SIRNET
  & $-0.041$ & $0.027$ & $[-0.094,\;+0.012]$ \\[3pt]
\multicolumn{4}{l}{\textit{Group 5: Doubly robust}} \\[2pt]
\quad DML-Xonly
  & $+0.013$ & $0.040$ & $[-0.065,\;+0.092]$ \\
\quad DML-XS
  & $+0.006$ & $0.028$ & $[-0.049,\;+0.061]$ \\
\quad SIRNET
  & $-0.044$ & $0.035$ & $[-0.112,\;+0.024]$ \\
\bottomrule
\end{tabular}
\end{table}

\section{Discussion}
\label{sec:discussion}

\textbf{Summary.} SIRNET is a neural implementation of a
semiparametric orthogonal moment and a method for causally adapting pretrained
surrogate embeddings.  It applies a Neyman-orthogonal score that uses actual
$Y$ observations to correct the raw surrogate-based estimate, while a shared
adapter pools information across the nuisance tasks that determine that score.
The empirical claim is consequently about improved nuisance estimation from
fixed pretrained embeddings, not about end-to-end representation learning from
raw unstructured objects. In the Monte Carlo ablation, this adaptation reduces
finite-sample variance and RMSE relative to separately trained neural nuisance networks; a
difficulty-matched sharedness continuum (Appendix~\ref{app:negative-transfer})
further shows the shared representation is at least as accurate as
capacity-matched independent networks in both bias and variance across the full
range from independent to fully shared nuisances, with no negative-transfer
penalty.


\textbf{Future directions.} The most immediate next experiment is to train a
pretrained image or text encoder (or a parameter-efficient adapter) inside each
cross-fitting fold and compare frozen generic embeddings, outcome-only
fine-tuning, independent nuisance adapters, and the shared SIRNET adapter.
That experiment would test whether the embedding-level gains documented here
extend to end-to-end adaptation of raw unstructured surrogates.  Other
directions include a sequential neural structure for longitudinal surrogate and
outcome data \citep{shi2024using,ChengGuoLiu2020SurrogateRepresentation,
battocchi2021estimating, shi2025terra}, conditional average treatment effects,
and large-scale multimodal deployments \citep{kaddour2021causal,
shi2025leveraging, hu2026experimentation}.

\textbf{Limitations.}
The conditional MAR assumption requires that all drivers of dropout are captured in $(X_i,S_i)$; unmeasured time-varying confounders would
invalidate identification. The outcome-only approaches require bootstrap for inference, which is time-consuming for deep-learning based methods.  
Most importantly for the ML scope, our A4 inputs are frozen pretrained MRI
embeddings followed by PCA; the study does not show that SIRNET can learn a
useful representation directly from raw images, text, video, or multimodal
objects. We leave these extensions to future work.

\subsection*{AI use statement}

In this work, we used generative AI tools for: 
\begin{quote}
    Design or provide feedback on research methodology or experiments; implement methods; clean and reformat dataset; support qualitative and thematic data analysis; formulate mathematical claims; assist in the writing of proofs; providing LLM judge labels as part of the core research idealization. 
\end{quote}
We have not used generative AI tools for: 
\begin{quote}
    Generate synthetic data sets; help develop theoretical models or conceptual frameworks; provide critical ingredients for proving mathematical claims; propose or refine hypotheses; assist with translation; interpret results.
\end{quote}
and the following is not applicable to this work:
\begin{quote}
    Assist with translation.
\end{quote}
Additionally, we used generative AI tools for:
\begin{quote}
    Create or modify scientific figures or images, suggest experimental parameters, summarize or analyse existing literature, identify relevant literature.
\end{quote}
We have reviewed all AI-assisted work. Specifically: methodology and
experiment designs suggested with AI assistance were assessed by the authors
for soundness and prior art through a manual literature survey before adoption;
AI-assisted code implementing the estimators and experiment harness was read,
run, and tested for correctness by at authors; dataset
cleaning and reformatting scripts were checked against the source data to
confirm no records were dropped or altered; qualitative and thematic analyses
produced with AI assistance were verified against the underlying model outputs
and human labels; and we use AI to help write latex formulas when drafting proofs but guide the process manually to ensure correctness. AI-generated
figures, experimental parameters, and literature pointers were likewise
inspected and confirmed by the authors before inclusion. Also our research target is LLM-as-judges, so we generated and evaluated many LLM judge labels as part of the core research idealization. We take responsibility
for the final content of this work, including text, claims or artifacts produced
with the aid of generative AI.

\bibliographystyle{plainnat}
\bibliography{references}

\section*{Checklist}

\begin{enumerate}

  \item For all models and algorithms presented, check if you include:
  \begin{enumerate}
    \item A clear description of the mathematical setting, assumptions, algorithm, and/or model. \textbf{Yes.} The setup, score, network, and training objective are in Sections~\ref{sec:setup}--\ref{sec:architecture} and Algorithm~\ref{alg:cross-fitted-sirnet}.
    \item An analysis of the properties and complexity (time, space, sample size) of any algorithm. \textbf{Yes.} Appendix~\ref{app:computational-aspects} gives per-fit, cross-fitting, and nested-tuning costs, memory requirements, and effective sample sizes; Appendix~\ref{app:theory} gives statistical analysis on conditional nuisance-rate guarantees.
    \item (Optional) Anonymized source code, with specification of all dependencies, including external libraries. \textbf{Yes.} We attach an anonymized code package as supplementary material, with a README detailing dependencies and instructions for reproducing the experiments.
  \end{enumerate}

  \item For any theoretical claim, check if you include:
  \begin{enumerate}
    \item Statements of the full set of assumptions of all theoretical results. \textbf{Yes.} Identification assumptions are in Section~\ref{sec:setup}; representation, smoothness, and optimization assumptions are in Appendix~\ref{app:assumptions}.
    \item Complete proofs of all theoretical results. \textbf{Yes.} Proofs of the paper's proposition, rate implication, and supporting lemmas appear in Appendix~\ref{app:theory}; externally sourced results are stated with citations.
    \item Clear explanations of any assumptions. \textbf{Yes.} The text following the assumptions discusses conditional MAR, overlap, representation sufficiency, smoothness, and the unverified optimization premise.
  \end{enumerate}

  \item For all figures and tables that present empirical results, check if you include:
  \begin{enumerate}
    \item The code, data, and instructions needed to reproduce the main experimental results (either in the supplemental material or as a URL). \textbf{Yes.} We provide replication files as supplementary material, with a README detailing how to run the experiments.
    \item All the training details (e.g., data splits, hyperparameters, how they were chosen). \textbf{Yes.} Candidate configurations, tuning, and bootstrap settings are in Appendix~\ref{app:hparam}.
    \item A clear definition of the specific measure or statistics and error bars (e.g., with respect to the random seed after running experiments multiple times). \textbf{Yes.} Tables~\ref{tab:mc-main} and~\ref{tab:semisynthetic} define the reported Monte Carlo standard errors, coverage, and power; Appendix~\ref{app:hparam} specifies repetitions and bootstrap intervals.
    \item A description of the computing infrastructure used. (e.g., type of GPUs, internal cluster, or cloud provider). \textbf{Yes.} Appendix~\ref{app:computational-aspects} describes the GPU-equipped server, CPU and memory limits, and execution modes.
  \end{enumerate}

  \item If you are using existing assets (e.g., code, data, models) or curating/releasing new assets, check if you include:
  \begin{enumerate}
    \item Citations of the creator If your work uses existing assets. \textbf{Yes.} The A4 study, Swin architecture, and relevant prior methods are cited in the application and related-work sections.
    \item The license information of the assets, if applicable. \textbf{Yes.} The manuscript specifies the terms governing the A4 data.
    \item New assets either in the supplemental material or as a URL, if applicable. \textbf{Yes.} We attach anonymized code or newly released artifact as supplementary material.
    \item Information about consent from data providers/curators. \textbf{Yes.} The A4 data-access acknowledgement statement is documented in the paper.
    \item Discussion of sensible content if applicable, e.g., personally identifiable information or offensive content. \textbf{No.} We are not aware of any privacy concerns for the participant-level clinical and imaging data.
  \end{enumerate}

  \item If you used crowdsourcing or conducted research with human subjects, check if you include:
  \begin{enumerate}
    \item The full text of instructions given to participants and screenshots. \textbf{Not Applicable.} This paper analyzes existing A4 trial data and reports no new participant-facing task or crowdsourcing study.
    \item Descriptions of potential participant risks, with links to Institutional Review Board (IRB) approvals if applicable. \textbf{No.} The manuscript does not report the relevant A4 ethics approval or secondary-analysis determination and data-protection arrangements.
    \item The estimated hourly wage paid to participants and the total amount spent on participant compensation. \textbf{Not Applicable.} No participants were recruited or paid for this secondary analysis.
  \end{enumerate}

\end{enumerate}

\clearpage
\appendix
\thispagestyle{empty}

\onecolumn
\aistatstitle{Appendix}

\section{More on estimation}\label{app:more-estimation}
 
\subsection{Alternative Estimators}

\textbf{Surrogate-only variant.}
Dropping the $R/r\cdot(Y-\tilde\mu)$ correction from
equation~\eqref{eq:sirnet-estimator} yields a \emph{surrogate-only} score
\begin{align}
  \psi^{\mathrm{surr}}(O_i;\tau,\bfeta)
  &= \mu(1,X_i)-\mu(0,X_i)-\tau
  \nonumber\\ 
  &\quad+ \frac{W_i}{e(X_i)}\bigl[\tilde\mu(1,X_i,S_i)-\mu(1,X_i)\bigr]
  \nonumber\\
  &\quad- \frac{1-W_i}{1-e(X_i)}\bigl[\tilde\mu(0,X_i,S_i)-\mu(0,X_i)\bigr].
  \label{eq:surr-only-score}
\end{align}
This score omits the observed-outcome residual correction and therefore
requires consistent estimation of the outcome regressions
$\tilde\mu$ and $\mu$ (in addition to a correct or known treatment propensity);
it is not protected by a correctly specified labeling propensity when those
regressions are wrong.  Because its influence expression does not account for
first-stage outcome-model estimation, the corresponding plug-in standard error
can understate sampling variability; paired bootstrap confidence intervals are
more reliable for this variant.

\textbf{Weighting-only variant.}
Dropping both outcome-regression terms from
equation~\eqref{eq:sirnet-estimator} yields the inverse-probability-weighted
(IPW) score
\begin{align}
  \psi^{\mathrm{ipw}}(O_i;\tau,e,r) 
  =
  \frac{W_iR_iY_i}{e(X_i)r(1,X_i,S_i)}
  -\frac{(1-W_i)R_iY_i}{\{1-e(X_i)\}r(0,X_i,S_i)}
  -\tau .\label{eq:ipw-only-score}
\end{align}
Its cross-fitted estimator is the sample average of the first two terms in
equation~\eqref{eq:ipw-only-score}, using out-of-fold estimates of $e$ and
$r$. It does not use $\tilde\mu$ or $\mu$, and is consistent when the
treatment and labeling propensities are consistently estimated (with $e$
known under a completely randomized design). Unlike the full score, it has no
outcome-regression augmentation and is not Neyman-orthogonal to errors in the
estimated propensities. It can consequently be highly variable under weak
overlap and sensitive to first-stage propensity error. The usual plug-in
influence-function standard error ignores that first-stage contribution unless
the propensities are known or estimated sufficiently fast; for the fitted-IPW
variant we therefore use paired bootstrap intervals, refitting all nuisance
models within each resample. Probability clipping or weight truncation can
improve numerical stability in practice.

\subsection{Cross-fitting procedure}
Algorithm~\ref{alg:cross-fitted-sirnet} summarizes the cross-fitting procedure for SIRNET.
\begin{algorithm}[ht!]
\caption{Cross-fitted SIRNET estimation}
\label{alg:cross-fitted-sirnet}
\begin{algorithmic}[1]
\Require Study sample $\calD$ and number of folds $K$
\State Partition $\calD$ into folds $\mathcal{I}_1,\ldots,\mathcal{I}_K$.
\For{$k=1,\ldots,K$}
  \State Fit SIRNET on $\calD\setminus\mathcal{I}_k$ using
    loss~\eqref{eq:training-loss}, obtaining training-fold representations.
  \For{each $i\in\mathcal{I}_k$}
    \State Evaluate the fitted model to obtain
      $(\hat{\tilde\mu}_i(0),\hat{\tilde\mu}_i(1),
      \hat\mu_i(0),\hat\mu_i(1),\hat e_i,
      \hat r_i(0),\hat r_i(1))$.
  \EndFor
\EndFor
\State Aggregate the out-of-fold predictions and compute
  $\hat\tau_{\mathrm{SIRNET}}$ via equation~\eqref{eq:sirnet-estimator}.
\State Compute the influence-function standard error in
  equation~\eqref{eq:sirnet-se} and report
  $\hat\tau_{\mathrm{SIRNET}}\pm
  z_{1-\alpha/2}\widehat{\mathrm{se}}(\hat\tau_{\mathrm{SIRNET}})$.
\end{algorithmic}
\end{algorithm}

\section{More details on numerical experiments}

\subsection{Data generating processes}\label{app:mc-dgps}
We used the following two DGPs to generate the Monte Carlo results in Tables~\ref{tab:mc-main}. 

\textbf{DGP 1. Nonlinear shared-plus-private DGP.}
Let $Z_i\sim N(0,I_6)$ be shared factors,
$A_i=(A_i^e,A_i^r,A_i^y)\sim N(0,I_3)$ be task-private factors, and
$C_i\sim N(0,1)$ be a protected selection anchor, all mutually independent.
Fixed dense loading matrices are drawn once and then held fixed across Monte
Carlo replications.  The observed inputs are
\[
 X_i=Z_iB_X+A_iL_X+\varepsilon_{Xi},\qquad
 S_i=Z_iB_S+A_iL_S+W_i\delta+e_1(2C_i+W_i)+\varepsilon_{Si},
\]
where $X_i\in\R^{24}$, $S_i\in\R^{32}$, and the entries of
$\varepsilon_X,\varepsilon_S$ are i.i.d. $N(0,0.5^2)$. The private loading
directions are orthogonal to the shared treatment, labeling, and direct-outcome
directions; the first surrogate coordinate is reserved for the anchor. Let
$q^e_i$ be four variance-standardized dense projections of $X_i$ and $q^r_i$
be four projections of $(X_i,S_i)$ excluding that anchor coordinate. With
$\sigma(t)=(1+e^{-t})^{-1}$, treatment and outcome-observation probabilities are
\begin{align*}
 e_i &= 0.15+0.70\,\sigma\!\left(
   0.75\sin q^e_{i1}+0.55q^e_{i2}q^e_{i3}
   -0.30\tanh q^e_{i4}+0.35\sin A_i^e\right),\\
 r_i &= 0.30+0.60\,\sigma\!\left(
   0.35+0.35\{0.90\sin q^r_{i1}+0.60q^r_{i2}q^r_{i3}
   -0.35\tanh q^r_{i4}\}+0.15\tanh A_i^r-1.20S_{i1}\right).
\end{align*}
The outcome is
\begin{align*}
Y_i={}&1.50\sin(Z_{i1}+0.5Z_{i2})+Z_{i3}Z_{i4}
 +0.70(Z_{i5}^2-1)+0.55\cos Z_{i6}\\
&+0.30X_{i1}-0.20X_{i2}+0.35\{(A_i^y)^2-1\}
 +3\tanh(1.5C_i)+\tau W_i+\varepsilon_{Y,i},
 \qquad \varepsilon_{Y,i}\sim N(0,0.5^2).
\end{align*}
This design tests robustness to task-private structure rather than assuming
complete sharing, while preserving MAR conditional on $(W,X,S)$.

\textbf{DGP 2. Deep bounded-latent shared-plus-private DGP.}
This design uses the same construction with $X\in\R^{48}$ and $S\in\R^{64}$,
independently drawn dense loadings, and anchor proxy
$S_{i1}=0.9C_i+W_i+\varepsilon_{Si1}$. It uses the treatment logit
\[
1.05\sin(q^e_{i1}+0.70\sin q^e_{i2})
+0.80\sin(q^e_{i3}q^e_{i4})+0.45\tanh(q^e_{i1}q^e_{i3})+0.35\sin A_i^e,
\]
within the same $0.15+0.70\sigma(\cdot)$ map, and labeling probability
\[
r_i=0.50+0.50\sigma\!\left(0.30-0.80S_{i1}
+0.85\sin(q^r_{i1}+0.65q^r_{i2})+0.65\sin(q^r_{i3}q^r_{i4})\right).
\]
It adds the deep shared outcome component
\[
1.10\sin\{Z_{i1}+0.75\sin(Z_{i2})\}
+0.85\sin(Z_{i3}Z_{i4})+0.60\tanh(Z_{i5}Z_{i6})
\]
to the outcome model and then a bounded latent interaction:
\[
 V_i=\operatorname{sign}\{\sin(3C_i)+.70\epsilon_i\},\qquad
 Y_i\leftarrow Y_i+1.00V_i+1.35W_iV_i.
\]
Neither $C_i$ nor $V_i$ is supplied directly to an estimator. The bounded,
mean-zero interaction leaves the population ATE equal to $\tau$ but makes an
outcome-only bridge sensitive to representation error. We report estimation
metrics at $\tau=.7$ and power at $\tau=1$.

\subsection{Estimators compared in the Monte Carlo experiments}
\label{app:mc-estimators}

This subsection defines the method names in Table~\ref{tab:mc-main}. All
machine-learned quantities below are cross-fitted; hats therefore denote
out-of-fold predictions. Write $\mathbb P_n a=n^{-1}\sum_i a_i$ and
$n_w=\sum_i\ind{W_i=w}$. For a fitted conditional-outcome
model, write $\hat m_w(F)=\widehat{\E}[Y\mid W=w,R=1,F]$.

\paragraph{Complete case and imputation baselines.}
\textbf{DiM} is the unadjusted complete-case contrast,
\[
 \hat\tau_{\mathrm{DiM}}=
 \frac{\sum_i W_iR_iY_i}{\sum_iW_iR_i}
 -\frac{\sum_i(1-W_i)R_iY_i}{\sum_i(1-W_i)R_i}.
\]
\textbf{RI-X} is $\mathbb P_n\{\hat m_1(X_i)-\hat m_0(X_i)\}$, where each
outcome model is fitted within treatment arm on observed outcomes. 
\textbf{RI-XS} uses the same formula with $F_i=(X_i,S_i)$. It is deliberately
included as a naive benchmark: it evaluates both arm-specific outcome models
at the realized post-treatment surrogate $S_i(W_i)$, rather than marginalizing
over the surrogate distribution under each arm.

\textbf{Pooled-Surr} fits one pooled model
$\hat\nu(X,S)\approx\E[Y\mid X,S,R=1]$, ignoring treatment, and contrasts
its predictions between the realized treatment groups:
\[
 \hat\tau_{\mathrm{Pooled\text{-}Surr}}
 =\frac{1}{n_1}\sum_iW_i\hat\nu(X_i,S_i)
  -\frac{1}{n_0}\sum_i(1-W_i)\hat\nu(X_i,S_i).
\]

\paragraph{Missingness-only and surrogate-only scores.}
\textbf{IPCW-DiM} estimates arm-specific X-only labeling models
$\hat r_w(X)\approx\Pr(R=1\mid W=w,X)$ and uses
\[
 \hat\tau_{\mathrm{IPCW}}=
 \frac{1}{n_1}\sum_i\frac{W_iR_iY_i}{\hat r_1(X_i)}
 -\frac{1}{n_0}\sum_i\frac{(1-W_i)R_iY_i}{\hat r_0(X_i)}.
\]
It cannot correct selection that remains associated with $S$ after
conditioning on $X$.

For the next two methods, let $\hat e(X)$ be the treatment propensity,
$\hat{\tilde\mu}_w(X,S)$ the observed-outcome regression, and
$\hat\mu_w(X)$ its arm-specific marginalization over $S$. 
\textbf{Surr-Only} uses
\[
 \mathbb P_n\!\left[\hat\mu_1(X_i)-\hat\mu_0(X_i)
 +\frac{W_i}{\hat e(X_i)}\{\hat{\tilde\mu}_1(X_i,S_i)-\hat\mu_1(X_i)\}
 -\frac{1-W_i}{1-\hat e(X_i)}\{\hat{\tilde\mu}_0(X_i,S_i)-\hat\mu_0(X_i)\}\right].
\]
It omits the $R/r$ outcome-residual correction. ``Surr-Only [XGB]'' and
``Surr-Only [SIRNET]'' differ only in the nuisance learner.

\paragraph{Full orthogonal scores.}
\textbf{DML-Xonly [XGB]} uses the ordinary missing-outcome AIPW score with
all nuisances conditioned on $X$ only:
\[
 \mathbb P_n\!\left[\hat m^X_1(X_i)-\hat m^X_0(X_i)
 +\frac{W_iR_i}{\hat e(X_i)\hat r^X_1(X_i)}\{Y_i-\hat m^X_1(X_i)\}
 -\frac{(1-W_i)R_i}{\{1-\hat e(X_i)\}\hat r^X_0(X_i)}
 \{Y_i-\hat m^X_0(X_i)\}\right].
\]
\textbf{DML-XS [XGB]} and
\textbf{SIRNET} use the full limited-outcome-data orthogonal score in
equation~\eqref{eq:sirnet-estimator}; the former estimates its nuisance
functions with XGBoost, while the latter estimates them jointly with SIRNET.
Thus DML-XS and SIRNET have the same target score and differ in nuisance
estimation, whereas DML-Xonly omits the surrogate from its outcome and
labeling nuisances. All estimated probabilities are clipped using the
prespecified treatment- and labeling-probability floors before score
evaluation.

\subsection{Architecture-ablation variants}
\label{app:additional-architectures}

In this section, we evaluate different architecture-ablation variants of SIRNET to understand the impact of shared and task-specific components on performance.
Table~\ref{tab:mc-head-robustness} holds the observed data, cross-fitting
folds, optimizer, loss weights, training budget, early-stopping rule, and
probability-clipping rule fixed, and evaluates every variant with the same full
orthogonal score. The variants differ only in how nuisance tasks share neural
parameters and in the explicitly indicated raw-feature residual paths.

\paragraph{Shared.}
This is the default SIRNET. A baseline encoder produces $B=b_\theta(X)$ and
a surrogate encoder produces $H=g_\theta(B,S)$. The treatment and marginal
heads use $B$; the labeling and arm-specific outcome heads use $(B,H)$. Thus
all nuisance tasks shape the same two encoders.

\paragraph{Partial.}
Partial sharing retains the common $B$ and $H$ encoders, but gives the
outcome and labeling tasks separate residual adapters,
$H_Y=H+a_Y(H)$ and $H_R=H+a_R(H)$, before their respective heads. Treatment
and marginal heads continue to use $B$. This tests whether the outcome and
missingness tasks benefit from a small task-private adjustment while preserving
the shared representation.

\paragraph{Wide/deep.}
Wide/deep retains the shared deep encoders and heads, and adds zero-initialized
linear residual paths from the standardized raw inputs: $X$ for treatment and
marginal heads, and $(W,X,S)$ or $(X,S)$ as appropriate for labeling and
outcome heads. It therefore tests whether a shallow direct-feature path adds
useful task-private capacity beyond the shared nonlinear representation. The
residual paths are additive, not replacements for the shared heads.

\paragraph{Partial+wide/deep.}
This variant combines the outcome/labeling residual adapters of Partial with
the raw-feature linear residual paths of Wide/deep.

\paragraph{Separate.}
The Separate baseline has no shared trainable parameters. It fits four
independent MLP blocks for $e(X)$, $r(W,X,S)$,
$\tilde\mu(0,X,S),\tilde\mu(1,X,S)$, and
$\mu(0,X),\mu(1,X)$, respectively, and then evaluates the same full score.
Its marginal block is trained against detached outcome predictions, so it
retains the generated-target construction without an encoder-mediated gradient
path. It is matched as closely as practical in trainable-parameter count and
training budget: 5,426 parameters versus 5,255 for the strengthened-panel
shared model, and 20,446 parameters in the deep panel, matched to the wider
main-table SIRNET configuration. Consequently, Separate is an ablation of
representation sharing, not a different causal estimator.

\begin{table}[htbp!]
\centering
\caption{Central test of shared nuisance learning ($n=1000$, $K=5$,
  $n_{\mathrm{sim}}=150$ per panel). All rows use the same full orthogonal
  score, input restrictions, and generated marginalization target.
  \textbf{Separate} trains four independent neural nuisance models with no
  shared parameters, compute/parameter-matched to full-score SIRNET.}
\label{tab:mc-head-robustness}
\small
\setlength{\tabcolsep}{5pt}
\begin{tabular}{@{}lrrr@{}}
\toprule
Architecture & Bias & SD & RMSE \\
\midrule
\multicolumn{4}{c}{\textbf{Nonlinear shared+private DGP} ($\tau=0$)} \\
\midrule
Shared             & $-0.069$ & $0.227$ & $0.237$ \\
Partial            & $-0.080$ & $0.226$ & $0.239$ \\
Wide/deep          & $-0.060$ & $0.234$ & $0.240$ \\
Partial+wide/deep  & $-0.076$ & $0.238$ & $0.249$ \\
Separate           & $-0.032$ & $0.270$ & $0.272$ \\
\midrule
\multicolumn{4}{c}{\textbf{Deep bounded-latent shared+private DGP} ($\tau=.7$)} \\
\midrule
Shared             & $-0.079$ & $0.251$ & $0.262$ \\
Partial            & $-0.097$ & $0.267$ & $0.283$ \\
Wide/deep          & $-0.022$ & $0.261$ & $0.261$ \\
Partial+wide/deep  & $-0.026$ & $0.268$ & $0.268$ \\
Separate           & $+0.001$ & $0.331$ & $0.331$ \\
\bottomrule
\end{tabular}
\end{table}

From Table~\ref{tab:mc-head-robustness}, the shared architecture has the smallest RMSE in the top panel. In
the bottom panel, wide/deep is only marginally smaller than shared ($0.261$
versus $0.262$), whereas partial sharing is less precise. This is the anticipated variance--RMSE mechanism: sharing
does not make every individual nuisance estimate uniformly less biased, but it
can stabilize the collection of nuisances entering the orthogonal score.
Overall, the comparison provides direct evidence for the statistical multitask
claim, rather than merely showing that a larger neural network works well.

\subsection{Ablation on sharedness strength}
\label{app:negative-transfer}

The main Monte Carlo designs give the nuisance functions both shared and
task-specific structure. Here we ask whether sharing network parameters still
helps when the response and outcome tasks have less in common. We compare
shared SIRNET, a partially shared network, and separate networks while varying
the amount of shared signal.

\paragraph{Difficulty-matched design.}
Simply weakening a shared signal would also weaken the selection problem. To
separate these changes, we use a design indexed by $\gamma\in[0,1]$. The
response and outcome tasks each use an index of the form
$\sqrt{\gamma}\,f(Z_{\mathrm{core}})+\sqrt{1-\gamma}\,f(Z_k)$, where the
shared and task-specific terms are independent and have unit variance.
Thus each index has the same variance for every $\gamma$: at $\gamma=1$
the two tasks use the same latent index, while at $\gamma=0$ they use
independent indices. The treatment task uses a separate index throughout.
A fixed latent factor $U$ affects the outcome and is visible to the response
model only through a noisy, treatment-shifted surrogate coordinate. This
maintains selection bias while preserving MAR conditional on $(W,X,S)$ and
the treatment effect $\tau$. Across the grid, complete-case bias stays
between $-0.13$ and $-0.20$, and the outcome SD is about $3.15$. These
diagnostics suggest that the estimation problem remains broadly comparable
as sharing varies.

\paragraph{Response calibration.}
We keep the true response probability in $[0.35,0.90]$. We also clip estimated response probabilities at $0.25$,
below the true lower bound, to limit the effect of very small predictions.

\paragraph{Results.}
Figure~\ref{fig:mc-negative-transfer} reports bias, SD, and RMSE of the full
orthogonal score against $\gamma$ for the three architectures ($n=8000$,
$n_{\mathrm{sim}}=120$, $K=5$ folds; shared $4871$, partial $6023$, and
separate $4773$ trainable parameters). Shared and partial networks have
lower bias and RMSE than separate networks at every tested $\gamma$,
including $\gamma=0$. Shared-model bias ranges from $-0.010$ to $-0.020$
versus $-0.029$ to $-0.038$ for separate networks; the corresponding RMSE
ranges are $0.073$--$0.084$ and $0.076$--$0.090$. Coverage ranges from
$0.93$ to $0.98$. As a diagnostic, we replace estimated treatment and
response probabilities with their true values while retaining the learned
outcome heads. The resulting bias is small ($+0.004$ to $-0.010$) and
similar across architectures. A slight increase in bias near $\gamma=1$
appears for all methods, including this diagnostic; these results do not
isolate its cause.

\begin{figure}[htbp!]
\centering
\includegraphics[width=\textwidth]{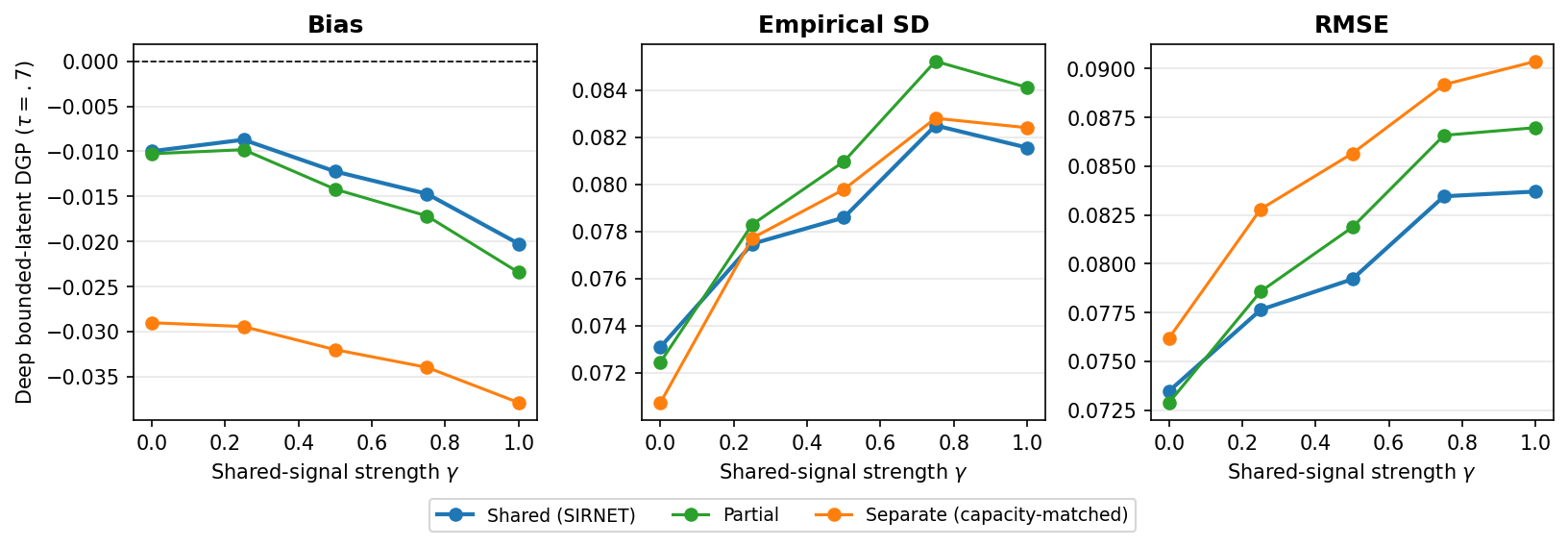}
\caption{Difficulty-matched sharedness continuum ($\tau=.7$, $n=8000$,
  $n_{\mathrm{sim}}=120$, $5$ folds). Bias, empirical SD, and RMSE of the full
  orthogonal score for the shared (SIRNET), partial, and capacity-matched
  separate architectures as a function of the sharedness parameter $\gamma$,
  where $\gamma=1$ makes the response and outcome nuisances a common latent
  index and $\gamma=0$ makes them independent. }
\label{fig:mc-negative-transfer}
\end{figure}

\paragraph{Interpretation.}
In this design, sharing does not incur a large loss even when
the response and outcome signals are independent, although it is slighly worse than but almost comparable to separate networks in variance when $\gamma=0$. Together
with Table~\ref{tab:mc-head-robustness}, the result supports shared nuisance
learning for the settings tested here, especially for its ability to maintain a good balance between bias and variance control.

\subsection{High-Dimensional Representation-Reuse Diagnostics}
\label{app:repr-head-diagnostic}

In this section we ask a diagnostic question: can a
trained shared encoder be handed to a different nuisance learner?
We compare raw, shared-SIRNET, and
true (oracle) representations in the same five-fold
cross-fitted orthogonal score, and compare them with two nuisance learners: Ridge and XGBoost. 
The results are reported in Table \ref{tab:mc-repr-quality}. We can see that overall the transfer of 
the learned representation is beneficial for Ridge but not for XGBoost: XGB/SIRNET has RMSE $0.384$ versus
$0.336$ in the first panel and $0.390$ versus $0.332$ in the second
panel. A paired 30-replication screen with wider $(32,32)$ and $(32,64)$
embeddings and a deeper, slower XGBoost head gave similar results. The
diagnostic therefore supports a narrow conclusion: SIRNET's encoder can be
useful to some replacement heads, but it is not a universally transferable
feature map for tree learners.

\begin{table}[htbp!]
\centering
\caption{High-dimensional representation-reuse diagnostic ($n=1000$, $K=5$,
  $p_{\mathrm{noise}}=40$, $n_{\mathrm{sim}}=300$ per panel). All rows use
  the double-robust score; only point-estimation metrics are reported.}
\label{tab:mc-repr-quality}
\small
\setlength{\tabcolsep}{5pt}
\begin{tabular}{@{}lrrr@{}}
\toprule
Representation / learner & Bias & SD & RMSE \\
\midrule
\multicolumn{4}{c}{\textbf{Nonlinear shared+private DGP} ($\tau=0$)} \\
\midrule
Ridge / raw    & $-0.046$ & $1.216$ & $1.215$ \\
XGB / raw      & $+0.114$ & $0.317$ & $0.336$ \\
Ridge / SIRNET & $-0.057$ & $0.268$ & $0.273$ \\
XGB / SIRNET   & $-0.057$ & $0.380$ & $0.384$ \\
Ridge / oracle & $-0.012$ & $0.356$ & $0.356$ \\
XGB / oracle   & $+0.118$ & $0.451$ & $0.466$ \\
\midrule
\multicolumn{4}{c}{\textbf{Deep bounded-latent shared+private DGP} ($\tau=.7$)} \\
\midrule
Ridge / raw    & $-0.068$ & $2.654$ & $2.651$ \\
XGB / raw      & $-0.001$ & $0.332$ & $0.332$ \\
Ridge / SIRNET & $-0.102$ & $0.290$ & $0.308$ \\
XGB / SIRNET   & $-0.110$ & $0.375$ & $0.390$ \\
Ridge / oracle & $-0.053$ & $0.832$ & $0.832$ \\
XGB / oracle   & $+0.020$ & $0.341$ & $0.341$ \\
\bottomrule
\end{tabular}
\end{table}

\subsection{Sample-Size Sensitivity}
\label{sec:semisyn-nsizes}

We revisit the two Monte Carlo DGPs with a varying sample size experiment to assess the performance of several strong estimators from Table \ref{tab:mc-main}.  At each
$n\in\{800,1000,1200,1400,1600,1800\}$, we run $n_{\mathrm{sim}}=300$
independent draws with the DGP functions, network capacity, cross-fitting,
and clipping rules held fixed.  We
retain the two full doubly robust scores (DML-Xonly and DML-XS) together with
full SIRNET and Surr-Only [SIRNET], and report bias, empirical SD, and RMSE. 

\begin{figure*}[ht!]
  \centering
  \IfFileExists{figures/mc-nsizes-strengthened.png}{%
    \includegraphics[width=\textwidth]{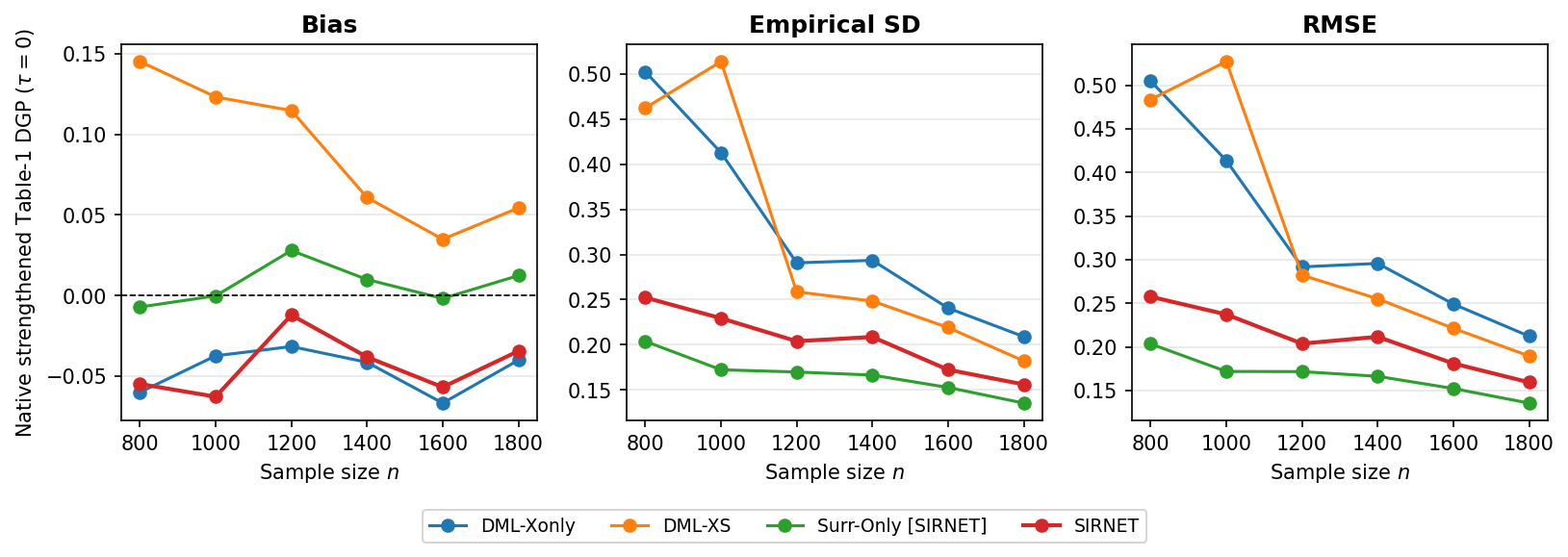}%
  }{\fbox{\parbox[c][2.0in][c]{0.94\textwidth}{\centering
      Strengthened native-DGP sample-size figure unavailable.}}}
  \caption{Sample-size sensitivity for the first DGP.  Four full-score or SIRNET-based estimators are evaluated at
  six sample sizes with $n_{\mathrm{sim}}=300$ per cell.}
  \label{fig:mc-nsizes-strengthened}
\end{figure*}

In the first DGP, the two SIRNET scores improve steadily with sample
size.  Full SIRNET's RMSE declines from $0.258$ at $n=800$ to $0.160$ at
$n=1800$, while Surr-Only [SIRNET] declines from $0.204$ to $0.136$ with
near-zero bias.  Both XGBoost full scores also benefit from larger samples,
but remain less precise at $n=1800$ (RMSE $0.213$ for DML-Xonly and $0.190$
for DML-XS).

\begin{figure*}[ht!]
  \centering
  \IfFileExists{figures/mc-nsizes-deep.png}{%
    \includegraphics[width=\textwidth]{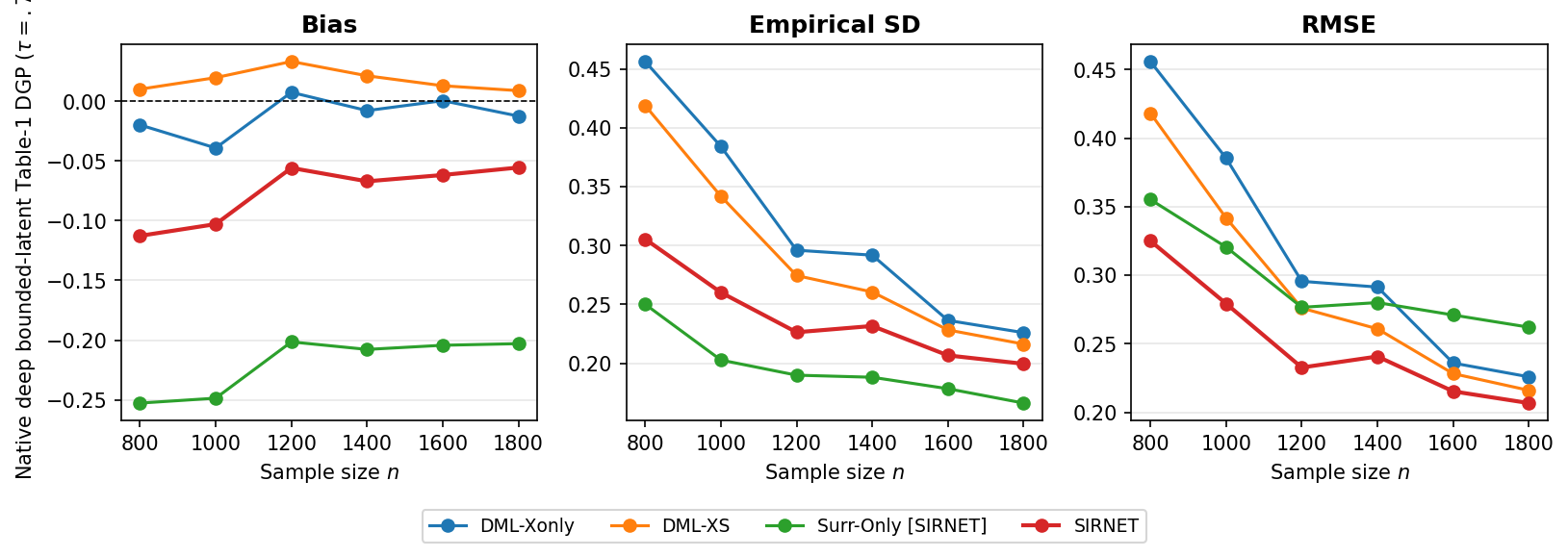}%
  }{\fbox{\parbox[c][2.0in][c]{0.94\textwidth}{\centering
      Deep native-DGP sample-size figure unavailable.}}}
  \caption{Sample-size sensitivity for the deep bounded-latent DGP, using the same four estimators,
  six sample sizes, and $n_{\mathrm{sim}}=300$ per cell.}
  \label{fig:mc-nsizes-deep}
\end{figure*}

The deep DGP makes the response correction consequential.  Full SIRNET's RMSE
falls from $0.325$ to $0.207$, below DML-XS ($0.418$ to $0.216$) and DML-Xonly ($0.456$ to $0.226$).  Surr-Only [SIRNET] is less variable but retains substantial negative bias, from $-0.253$ to $-0.203$, whereas full SIRNET reduces it from $-0.113$ to $-0.055$.  Thus larger samples improve nuisance learning, but do not substitute for the full response/outcome correction in the selection-linked mechanism.

\subsection{Semisynthetic DGPs}
\label{app:semisynthetic-dgps}

\paragraph{Nonlinear shared-plus-private DGP.}
To adapt the architecture test of Section~\ref{sec:mc} to the
A4 inputs, let $V_i$ concatenate the standardized $X_i$ and $S_i$ components.
Let $b^{\mathrm{MRI}}_{i1}$ and $b^{\mathrm{MRI}}_{i2}$ be the first two
standardized principal components of the baseline Swin MRI embedding; these
are included in $X_i$.
Six fixed dense orthogonal projection directions define standardized shared
factors $Z_{i1},\ldots,Z_{i6}$. Two additional projection directions,
orthogonal to the shared directions, define outcome-private and
labeling-private factors $A_i^y$ and $A_i^r$. A protected selection direction
is the cross-fitted residual of the first surrogate PC after predicting it
from $X_i$. The base outcome is
\begin{align*}
 f_i^{\mathrm{SP}}={}&1.50\sin(Z_{i1}+0.5Z_{i2})+Z_{i3}Z_{i4}
 +0.70(Z_{i5}^2-1)+0.55\cos Z_{i6}\\
 &+0.30b^{\mathrm{MRI}}_{i1}-0.20b^{\mathrm{MRI}}_{i2}
 +0.35\{(A_i^y)^2-1\}
 +0.70\tanh(1.5s_i^{\mathrm{hid}}),
\end{align*}
and the observation logit is
\[
 \ell_i^{\mathrm{SP}}=0.35+0.90\sin Z_{i1}+0.60Z_{i2}Z_{i3}
 -0.35\tanh Z_{i4}+0.35\tanh A_i^r
 -1.20s_i^{\mathrm{hid}}+0.20W_i\sin(Z_{i5}+Z_{i6}).
\]

\paragraph{Deep bounded-latent shared-plus-private DGP.}
This design uses the same A4 inputs but forms eight dense shared projections
and three orthogonal private directions. The background outcome augments the
preceding shared-plus-private structure with nested sine, product, and
hyperbolic-tangent terms. The response logit uses matching dense nonlinear
projections. Let
\[
 V_i^{\mathrm{deep}}=\operatorname{sign}\{\sin(3s_i^{\mathrm{hid}})
 +.70\epsilon_i^{V}\}.
\]
The fixed background outcome additionally contains
$V_i^{\mathrm{deep}}+1.35W_iV_i^{\mathrm{deep}}$. The latent sign is never
supplied as a feature; it is only imperfectly proxied by the surrogate PCs.
This creates an outcome-bridge challenge while preserving MAR because the
response mechanism remains a function of the observed $(W,X,S)$.

Now we consider the synthetic outcome. Its empirical standard deviation
below is computed only over participants for whom this outcome is observed.
For $d\in\{\mathrm{SP},\mathrm{Deep}\}$, we scale
$\operatorname{sd}(f_i^d)=0.5\operatorname{sd}(Y_i^{\mathrm{A4}})$ and draw
\[
 Y_i=f_i^d+\tau W_i+\sigma_\varepsilon\varepsilon_i,
 \qquad \varepsilon_i\overset{\mathrm{iid}}{\sim}N(0,1),
 \qquad
 \tau_{\mathrm{true}}^d=\tau+\bar f_1^d-\bar f_0^d.
\]
The last expression is the exact fixed-A4 contrast and accounts for the
realized imbalance in $f_i^d$ across the randomized arms. Although A4
treatment was randomized, every learner estimates the treatment propensity
from $X$; no estimator is supplied $e=0.5$. This convention keeps the
implementation uniform across designs and retains treatment prediction as an
auxiliary representation-learning task. 

\section{Theoretical analysis}
\label{app:theory}
\subsection{Assumptions}
\label{app:assumptions}
In implementation, each SIRNET nuisance head is a feedforward ReLU sub-network attached to a
shared encoder output: $\hat e$ and $\hat\mu(w,\cdot)$ are functions of
$B_i=b_\theta(X_i)\in\R^{d_B}$ alone; $\hat r(w,\cdot)$ and
$\hat{\tilde\mu}(w,\cdot)$ are functions of $(B_i,H_i)$ with
$H_i=g_\theta(B_i,S_i)\in\R^{d_H}$. Writing $d_g$ for the dimension of the
representation feeding head $g$ ($d_g=d_B$ for $e,\mu$; $d_g=d_B+d_H$
for $r,\tilde\mu$), the learned-encoder rate depends on both $d_g$ and
the approximation costs of the encoders.
The following assumption describes the component structure used below.

\begin{assumption}[Representation sufficiency]
\label{ass:representation}
Suppose $X\in[-1,1]^{D_X}$ and $S\in[-1,1]^{D_S}$ almost surely.
There exist representations $B_i^0=b_0(X_i)$ and $H_i^0=g_0(B_i^0,S_i)$ of
dimensions $d_B,d_H$, and functions $\bar e_0,\bar\mu_0(w,\cdot),
\bar r_0(w,\cdot),\bar{\tilde\mu}_0(w,\cdot)$ of those representations alone, such that
\[
  e_0(X_i)=\bar e_0(B_i^0),
  \quad
  \mu_0(w,X_i)=\bar\mu_0(w,B_i^0),
  \quad
  r_0(w,X_i,S_i)=\bar r_0(w,B_i^0,H_i^0),
  \quad
  \tilde\mu_0(w,X_i,S_i)=\bar{\tilde\mu}_0(w,B_i^0,H_i^0)
\]
almost surely, for $w=0,1$. To lighten notation we write $e_0,\mu_0,r_0,\tilde\mu_0$
for $\bar e_0,\bar\mu_0,\bar r_0,\bar{\tilde\mu}_0$ from here on, and $B_i,H_i$ for
$B_i^0,H_i^0$.
The coordinates of $b_0$ and $g_0$ take values in $[-1,1]$ and belong,
respectively, to bounded balls in
$W^{p_b,\infty}([-1,1]^{D_X})$ and
$W^{p_h,\infty}([-1,1]^{d_B+D_S})$, for fixed integers $p_b,p_h\ge1$.
\end{assumption}

Assumption~\ref{ass:representation} says that the nuisance-relevant
information in the high-dimensional inputs has an intrinsic latent-dimensional 
description. This can be plausible for structured objects such as images and
text, whose ambient coordinates are numerous while task-relevant variation may
be lower-dimensional.  It is nevertheless a substantive approximation
assumption, not a consequence of the architecture, the multitask loss, or a
pretrained embedding.  
The theory is developed on this assumption, as is common in deep-network inference analyses
\citep{FarrellLiangMisra2021DNN,SchmidtHieber2020DNN}. To formalize the approximation properties 
of the nuisance functions on these latent representations, we impose a Sobolev smoothness condition 
that ensures they can be well-approximated by neural networks.

\begin{assumption}[Sobolev smoothness and minimal-rate condition]
\label{ass:smoothness}
For each
\[
 g\in\{\tilde\mu(1,\cdot),\tilde\mu(0,\cdot),
          \mu(1,\cdot),\mu(0,\cdot),e,r(1,\cdot),r(0,\cdot)\},
\]
the true nuisance $g_0$ belongs to
$W^{p_g,\infty}([-1,1]^{d_g})$, where $p_g$ is a positive integer.  More
precisely, writing $\alpha=(\alpha_1,\ldots,\alpha_{d_g})$ for a
multi-index, $|\alpha|=\sum_{j=1}^{d_g}\alpha_j$, and $D^\alpha g_0$ for
the corresponding weak derivative, this means that $D^\alpha g_0$ exists
and is essentially bounded on $[-1,1]^{d_g}$ for every $|\alpha|\le p_g$,
with Sobolev norm
\[
 \|g_0\|_{W^{p_g,\infty}}
 :=\max_{|\alpha|\le p_g}\|D^\alpha g_0\|_{L^\infty([-1,1]^{d_g})}
 \le K
\]
for a finite constant $K$, and
\[
  p_g>d_g/2.
\]
Here $d_e=d_{\mu}=d_B$ and
$d_r=d_{\tilde\mu}=d_B+d_H$.   
\end{assumption}

Assumption~\ref{ass:smoothness} is imposed on the nuisance functions after
their representation coordinates have been rescaled to $[-1,1]^{d_g}$; this
rescaling is without loss of generality for bounded representations.  It says
that these functions have uniformly bounded derivatives through order $p_g$,
thereby ruling out arbitrarily sharp variation in the low-dimensional
coordinates.  The threshold $p_g>d_g/2$ is exactly the minimal condition for
the usual nonparametric exponent to satisfy
\[
  \frac{p_g}{2p_g+d_g}>\frac14,
\]
but the learned-encoder theorem additionally requires
$p_b/D_X>1/2$ and $p_h/(d_B+D_S)>1/2$ to control approximation of
the two encoders. These are sufficient smoothness conditions, not a claim
that the nuisances are literally Sobolev functions in the original
high-dimensional input space. Their burden grows with the ambient
encoder-input dimensions as well as the representation dimensions.

\subsection{Theoretical Properties}
\label{sec:theory}

In this section, we discuss the theoretical properties of SIRNET. 
The first result is a general cross-fitted oracle inference theorem by \citet[Theorems~2.1, 3.2, and~3.3]{KallusMao2024LimitedOutcomeData}, restated in our
notation, for the sake of self-containment.  It is generally learner-agnostic and applicable on any method that produces cross-fitted nuisance
estimators meeting consistency and product-rate conditions. For a cross-fitting fold $k$, write
$P_w$ for the distribution conditional on $W=w$. Define
$\rho_{e,k}=\|\hat e_k-e_0\|_{L^2(P)}$ and
$\rho_{\mu(w),k}=\|\hat\mu_k(w,\cdot)-\mu_0(w,\cdot)\|_{L^2(P)}$,
where $P$ is the marginal distribution of $X$ for these $X$-only
functions. For $g\in\{r,\tilde\mu\}$, define
$\rho_{g(w),k}=\|\hat g_k(w,\cdot)-g_0(w,\cdot)\|_{L^2(P_w)}$
on $(X,S)$. Let $\rho_g$ denote the maximum over the fixed number of
folds and, where applicable, treatment arms. These arm-specific norms
measure errors on the surrogate distribution actually used in each arm's
score, without requiring overlap between the post-treatment surrogate
distributions across arms. Equivalently, each arm-specific norm is the
$L^2(P)$ norm of the corresponding error multiplied by $\ind{W=w}$,
divided by $\Pr(W=w)^{1/2}$.

\begin{theorem}[Kallus--Mao cross-fitted oracle inference]
\label{thm:km-oracle}
Let $O_1,\ldots,O_n$ be i.i.d., 
and suppose Assumptions~\ref{ass:ignorability} and
\ref{ass:missing-at-random} hold. For a fixed number of cross-fitting
folds, suppose each evaluation fold is independent of its nuisance fits.
For some fixed $\epsilon_0>0$, require almost surely
$\hat e_k(X)\in[\epsilon_0,1-\epsilon_0]$ and
$\hat r_k(w,X,S)\in[\epsilon_0,1]$ for both arms and every fold.
Suppose every fold-specific nuisance
estimate satisfies
\[
 \rho_e,\ \rho_{\mu(w)},\ \rho_{r(w)},\
 \ \rho_{\tilde\mu(w)}=o_P(1),\qquad w\in\{0,1\}.
\]
Assume that for both arms
\begin{equation}
 \max_{w\in\{0,1\}}
 \left\{
   \rho_e\rho_{\mu(w)},\quad
   \rho_e\rho_{\tilde\mu(w)},\quad
   \rho_{r(w)}\rho_{\tilde\mu(w)}
 \right\}=o_P(n^{-1/2}),
 \label{eq:product-rate}
\end{equation}
and the bounded-moment conditions of
\citet[Assumption~6]{KallusMao2024LimitedOutcomeData}: for some
$C<\infty$ and $q>2$, the conditional variances of
$Y$ given $(W,X,S)$ and given $(W,X)$, and of
$\tilde\mu_0(W,X,S)$ given $(W,X)$, are essentially bounded by $C$,
while $\max_w\E|Y(w)|^q\le C$. Then the cross-fitted estimator in
equation~\eqref{eq:sirnet-estimator} satisfies
\[
 \sqrt n\{\hat\tau-\tau_0\}\ \rightsquigarrow\ N(0,V^*),
 \qquad
 n\,\widehat{\mathrm{se}}(\hat\tau)^2\ \to_P\ V^*,
\]
where $V^*$ is the efficiency bound in
equation~\eqref{eq:efficiency-bound}. If $V^*>0$, the confidence interval
based on equation~\eqref{eq:sirnet-se} has asymptotic coverage $1-\alpha$.
\end{theorem}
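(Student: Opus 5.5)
The plan is the standard double/debiased machine learning argument for cross-fitted estimators. Write $\hat\tau-\tau_0=n^{-1}\sum_i\psi(O_i;\tau_0,\bfeta_0)+\sum_k (n_k/n)\,\{R_{1,k}+R_{2,k}\}$. Here
\[
R_{1,k}=(\mathbb P_{n,k}-P)\{\psi(\cdot;\tau_0,\hat\bfeta_k)-\psi(\cdot;\tau_0,\bfeta_0)\}
\]
is the empirical-process term, and
\[
R_{2,k}=P\{\psi(\cdot;\tau_0,\hat\bfeta_k)\}-P\{\psi(\cdot;\tau_0,\bfeta_0)\}
\]
is the drift term; the second expression uses $P\psi(\cdot;\tau_0,\bfeta_0)=0$. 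This decomposition is available because $\hat\tau$ is the sample mean of the uncentred score and the score is linear in $\tau$ with coefficient $-1$.

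Given the leading term, the CLT yields $N(0,V^*)$ by i.i.d.\ sampling and finite variance. Finite variance follows from the moment conditions and overlap. So it suffices to show $R_{1,k},R_{2,k}=o_P(n^{-1/2})$ for each of the fixed number of folds.

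\textbf{Empirical-process term.} Condition on the training folds. Since the evaluation fold is independent of $\hat\bfeta_k$, $R_{1,k}$ has conditional mean zero. Its conditional variance is at most $n_k^{-1}\|\psi(\cdot;\tau_0,\hat\bfeta_k)-\psi(\cdot;\tau_0,\bfeta_0)\|_{L^2}^2$, so Chebyshev's inequality gives $R_{1,k}=o_P(n^{-1/2})$ once this $L^2$ distance is $o_P(1)$. To show that, expand the difference algebraically. Each piece is a product of a bounded factor and one nuisance error; the bounded factors come from $\hat e\in[\epsilon_0,1-\epsilon_0]$, $\hat r\ge\epsilon_0$ and the true overlap bounds. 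Pieces carrying $Y-\tilde\mu_0$ or $\tilde\mu_0-\mu_0$ are handled by conditioning on $(W,X,S)$ or $(W,X)$ and using the bounded conditional variances. Arm-specific norms enter because each $w$-piece carries $\ind{W=w}$, so every piece is controlled by one of the consistency rates $\rho=o_P(1)$.

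\textbf{Drift term; this is the main obstacle.} Compute $P\psi(\cdot;\tau_0,\bfeta)$ for a generic $\bfeta$ by iterated expectations. Work arm by arm; take the treated arm. Conditioning on $(W=1,X,S)$ and using MAR (Assumption~\ref{ass:missing-at-random}) gives
\[
\E[R(Y-\tilde\mu)/r\mid W=1,X,S]=(r_0/r)(\tilde\mu_0-\tilde\mu).
\]
Then condition on $(W=1,X)$. Ignorability (Assumption~\ref{ass:ignorability}) makes $S\mid W=1,X$ distributed as $S(1)\mid X$, and $\mu_0(1,X)=\E[\tilde\mu_0\mid W=1,X]$. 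Finally average over $X$ with weight $e_0/e$. The arm-1 bias should collapse to
\[
\E\Big[\frac{e_0}{e}\Big(\frac{r_0-r}{r}\Big)(\tilde\mu_0-\tilde\mu)\Big]+\E\Big[\Big(\frac{e_0}{e}-1\Big)\{(\mu_0-\mu)-(\tilde\mu_0-\tilde\mu)\}\Big],
\]
with the last expectation taken conditionally on $W=1$. I will verify that every first-order term cancels, which is Neyman orthogonality, and that only these cross products remain.

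Cauchy--Schwarz then bounds the arm-1 bias by a constant times $\rho_{r(1)}\rho_{\tilde\mu(1)}+\rho_e\rho_{\mu(1)}+\rho_e\rho_{\tilde\mu(1)}$. The care needed is in converting between $L^2(P_1)$ and $L^2(P)$ norms: $(e_0/e-1)$ is a function of $X$, and $e_0\ge\epsilon$ makes the two norms equivalent up to constants on $X$-functions. By \eqref{eq:product-rate} the bias is $o_P(n^{-1/2})$, and the control arm is symmetric.

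\textbf{Variance consistency and coverage.} Show $n^{-1}\sum_i\hat\psi_i^2\to_P V^*$. Write $\hat\psi_i-\psi_i$ as $(\tau_0-\hat\tau)$ plus the nuisance difference; the latter is $o_P(1)$ in $L^2$ fold-wise, by the bound from the empirical-process step combined with conditional Markov. Then apply the LLN to $\psi_i^2$, using the $q>2$ moment condition. Since $\bar\psi\to 0$, this gives $n\,\widehat{\mathrm{se}}^2\to_P V^*$. Slutsky's lemma with $V^*>0$ then delivers asymptotic coverage $1-\alpha$.
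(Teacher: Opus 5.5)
Your route is sound, but it is not the paper's. The paper gives no proof of Theorem~\ref{thm:km-oracle}; it restates \citet[Theorems~2.1, 3.2, 3.3]{KallusMao2024LimitedOutcomeData} and relies on the citation. You instead give the self-contained cross-fitting argument:
\begin{enumerate}
\item an i.i.d.\ leading term;
\item a remainder that is conditionally centred given the training folds and is removed by Chebyshev;
\item a second-order drift bounded by Cauchy--Schwarz;
\item conditional Markov plus the LLN for the variance.
\end{enumerate}
Citing buys brevity and inherits Kallus--Mao's hypotheses wholesale. Your route shows which hypotheses are actually used. For example, at a fixed law it uses only second moments of $Y(w)$, and (see below) the drift does not need the product $\rho_e\rho_{\tilde\mu(w)}$. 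Your empirical-process and variance steps are fine as sketched.

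The one step that is wrong as written is the drift identity. For a fixed $\bfeta$ (in particular $\hat\bfeta_k$ given the training folds), the arm-1 drift is exactly
\[
\E\Bigl[\Bigl(\frac{e_0}{e}-1\Bigr)\{\mu_0(1,X)-\mu(1,X)\}\Bigr]
+\E\Bigl[\frac{W}{e}\,\frac{r_0-r}{r}\,\{\tilde\mu_0-\tilde\mu\}\Bigr].
\]
Here $e,e_0$ are evaluated at $X$, $r,r_0,\tilde\mu,\tilde\mu_0$ at $(1,X,S)$, and the first expectation is over the marginal law of $X$. This is the paper's \eqref{eq:score-bias} from Proposition~\ref{prop:score-robustness}. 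Its proof is pure algebra and holds at any fixed nuisance vector with $e$, $1-e$, $r$ bounded away from zero.

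Your display differs from it in two ways. Your second term carries a spurious $-(\tilde\mu_0-\tilde\mu)$. In addition, conditioning on $W=1$ reweights the $X$-law by $e_0(X)/\Pr(W=1)$. To see that your display is false, take $r=r_0$, $e\neq e_0$, $\tilde\mu\neq\tilde\mu_0$ and $\mu(1,X)=\E[\tilde\mu(1,X,S)\mid W=1,X]$. Your expression then vanishes, while the true drift is $\E[(e_0/e-1)\,\E\{\tilde\mu_0-\tilde\mu\mid W=1,X\}]$, which is generally nonzero.

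The damage is limited. The spurious piece is itself a product of a propensity error and an outcome error, so your final bound is still true. With the correct identity you get the sharper bound $\epsilon_0^{-1}\rho_e\rho_{\mu(1)}+\epsilon_0^{-2}\rho_{r(1)}\rho_{\tilde\mu(1)}$. This uses $\hat e,\hat r\ge\epsilon_0$ and Cauchy--Schwarz in $L^2(P)$ and $L^2(P_1)$, respectively. Replace your display by this identity, and the rest of your argument goes through.
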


To apply Theorem~\ref{thm:km-oracle}, we need to ensure that the cross-fitted
nuisance estimators satisfy its consistency and product-rate conditions.
We now introduce notation for the empirical optimization gaps used in a
conditional learned-encoder rate result. For each training fold, let
$\Theta_n$ be a growing joint ReLU sieve with trainable
$b_\theta,g_\theta$, and let $\hat\theta\in\Theta_n$ be the final fit.
The supervised part of~\eqref{eq:training-loss} is
\[
 \widehat{\mathcal R}_{\mathrm{sup}}(\theta)
 =\lambda_Y P_{n,R=1}\{Y-\tilde\mu_\theta(W,X,S)\}^2
 +\lambda_R P_n\ell_{\mathrm{ce}}\{R,r_\theta(W,X,S)\}
 +\lambda_W P_n\ell_{\mathrm{ce}}\{W,e_\theta(X)\},
\]
where $P_{n,R=1}$ averages over labeled units. Define
\begin{equation}
 \delta_{\mathrm{sup},n}
 :=\widehat{\mathcal R}_{\mathrm{sup}}(\hat\theta)
   -\inf_{\theta\in\Theta_n}
       \widehat{\mathcal R}_{\mathrm{sup}}(\theta).
 \label{eq:joint-supervised-gap}
\end{equation}
For each $w$, let $\hat q_w(X,S)=\hat{\tilde\mu}(w,X,S)$ and
$\hat f_w(X)=\hat\mu(w,X)$ denote the final fitted predictions on the
original inputs. With $\hat q_w$ held fixed, define
\begin{equation}
 \delta_{\mu(w),n}
 :=P_{n,w}(\hat q_w-\hat f_w)^2
   -\inf_{f\in\mathcal F_{m,w}}P_{n,w}(\hat q_w-f)^2,
 \label{eq:marginal-block-erm}
\end{equation}
where $P_{n,w}$ averages over $W_i=w$ and $\mathcal F_{m,w}$ contains
all bounded, clipped marginal composites $x\mapsto h_w(b(x))$ in the
joint sieve, allowing the comparator baseline encoder $b$ to vary.
These gaps are nonnegative when the fitted outputs belong to their
respective classes; defining them does not assert that they are small.
In particular, minimizing the full objective~\eqref{eq:training-loss}
with its generated marginal target and stop-gradient updates does not
guarantee a rate for either gap. Instead, the gaps are a way to state the optimization requirement for the learned-encoder nuisance rates, which is separate from the statistical
argument. They also appears in other ways in the literature on deep learning theory, such as in the analyses of deep regression \citep{SchmidtHieber2020DNN} and empirical minimizers \citep{FarrellLiangMisra2021DNN}. The resulting rate theorem applies to
final fits when the order of the optimization gaps shrinks in a certain magnitude as specified in the next theorem. Nevertheless, establishing them
for a particular training algorithm is a separate question.

The following theorem gives conditional
sufficient rates for the learned-encoder nuisances.
\begin{theorem}[Conditional learned-encoder composite nuisance rates]
\label{thm:composite-direct-rate}
Suppose Assumptions~\ref{ass:ignorability}, \ref{ass:missing-at-random},
\ref{ass:representation}, and~\ref{ass:smoothness} hold. On each of a
fixed number of training folds, suppose the fitted heads belong to the
joint sieve described above, with $\hat f_w\in\mathcal F_{m,w}$, and
the supervised weights $\lambda_Y,\lambda_R,\lambda_W$ are fixed and
bounded above and away from zero.
Assume outcome and marginal heads are clipped to bounded intervals
containing their true targets, and the probability heads $e$ and $r(w)$
are clipped to $[\epsilon_0,1-\epsilon_0]$ for some fixed
$0<\epsilon_0\le\epsilon$. Assume uniformly conditionally sub-Gaussian outcome residuals on the
$W=w,R=1$ sub-populations for both arms.
Set
\[
 s_{\mathrm{comp}}=\min\left\{\frac{p_b}{D_X},
 \frac{p_h}{d_B+D_S},\min_g\frac{p_g}{d_g}\right\},
\]
where the last minimum is over the seven heads of
Assumption~\ref{ass:smoothness}. Suppose $s_{\mathrm{comp}}>1/2$ and we use
a growing joint ReLU sieve with a fixed layered architecture at each $n$,
hidden-layer widths $O(m_n)$, depth $O(\log m_n)$, and
$O(m_n\log m_n)$ total nonzero parameters of uniformly bounded magnitude
across the shared encoders and seven heads, where
$m_n\asymp\{n/(\log n)^3\}^{1/(2s_{\mathrm{comp}}+1)}$. Write 
$\mathcal C_n=m_n(\log n)^3$.
\begin{itemize}
  \item Outcome, propensity, and reponse heads error. For the five directly supervised
heads $\mathcal G_{\mathrm{sup}}=\{e,r(0),r(1),\tilde\mu(0),\tilde\mu(1)\}$,
the fitted composite nuisances obey
\[
 \sum_{g\in\mathcal G_{\mathrm{sup}}}\rho_g^2
 =O_P\!\left(m_n^{-2s_{\mathrm{comp}}}
    +\frac{\mathcal C_n}{n}+\delta_{\mathrm{sup},n}\right),
\]
  \item Marginalization heads error. For each $w\in\{0,1\}$,
\[
 \rho_{\mu(w)}^2
 =O_P\!\left(m_n^{-2s_{\mathrm{comp}}}
    +\frac{\mathcal C_n}{n}+\delta_{\mathrm{sup},n}
    +\delta_{\mu(w),n}\right).
\]
\end{itemize}
Here the gaps and errors denote their maxima across folds. In particular,
if $\delta_{\mathrm{sup},n}=o_P(n^{-1/2})$ and
$\max_w\delta_{\mu(w),n}=o_P(n^{-1/2})$, all seven nuisance errors
are $o_P(n^{-1/4})$. Under the additional statistical conditions of
Theorem~\ref{thm:km-oracle}, its inference conclusions then apply to
the learned-encoder score estimator.
\end{theorem}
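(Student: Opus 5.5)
The plan is a standard sieve-M-estimation argument, applied to the joint supervised risk and then to the marginal block. Write $\mathcal R_{\mathrm{sup}}$ for the population counterpart of $\widehat{\mathcal R}_{\mathrm{sup}}$ and let $\theta_0$ denote the truth; it is representable through Assumption~\ref{ass:representation}.

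\textbf{Step 1: curvature of the supervised risk.} Each supervised term is minimized at its true nuisance, so I will first show that the joint excess risk dominates the summed squared errors. For the squared loss on labeled units, the excess risk is $\lambda_Y\,\E[\{\tilde\mu_\theta-\tilde\mu_0\}^2\mid R=1]$. By the labeling floor $r\ge\epsilon$ and MAR, the labeled distribution restricted to arm $w$ has density bounded below relative to $P_w$. Since $\Pr(W=w)\ge\epsilon$, this bounds $\rho^2_{\tilde\mu(w)}$ up to constants.

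For the cross-entropy terms, clipping of $e$ and $r$ to $[\epsilon_0,1-\epsilon_0]$ makes the loss strongly convex on that range. The excess risk then bounds $\rho_e^2$ and $\rho_{r(w)}^2$, again after converting $P$-norms to $P_w$-norms with the overlap constants.

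Summing the three terms gives $\sum_{g\in\mathcal G_{\mathrm{sup}}}\rho_g^2\lesssim \mathcal R_{\mathrm{sup}}(\hat\theta)-\mathcal R_{\mathrm{sup}}(\theta_0)$.

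\textbf{Step 2: approximation.} I will construct $\theta_n^*\in\Theta_n$ whose excess risk is $O(m_n^{-2s_{\mathrm{comp}}})$. To do this, approximate $b_0$, $g_0$ and each head by ReLU nets of width $m_n$, depth $O(\log m_n)$, and sup error $m_n^{-p/d}$, using Yarotsky-type bounds for $W^{p,\infty}$ balls. Then control the composite error by Lipschitz propagation:
\[
|\bar g(\hat B,\hat H)-\bar g(B,H)|\le \|\nabla\bar g\|_\infty\,(|\hat B-B|+|\hat H-H|).
\]
The head gradients are bounded because $p_g\ge1$. Errors in $b$ propagate through $g_\theta$ via the Lipschitz constant of $g_0$, or of its approximant. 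The worst exponent is $s_{\mathrm{comp}}$, so the squared sup-norm error is $m_n^{-2s_{\mathrm{comp}}}$.

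\textbf{Step 3: estimation.} This is the main obstacle. I will use a localized empirical-process bound, in the spirit of \citet{SchmidtHieber2020DNN,FarrellLiangMisra2021DNN}, over the whole sieve. The class has pseudo-dimension $O(m_n\log m_n\cdot\log m_n)$ and bounded weights, so covering numbers scale like $\log N(\delta)\lesssim m_n\log^2 m_n\log(n/\delta)$.

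The losses are bounded or sub-Gaussian after clipping; the sub-Gaussian residuals are handled by truncating at $\sqrt{\log n}$. The Bernstein condition follows from Step 1. A peeling argument then gives, with probability tending to one,
\[
\mathcal R(\hat\theta)-\mathcal R(\theta_0)\lesssim \mathcal R(\theta_n^*)-\mathcal R(\theta_0)+\mathcal C_n/n+\delta_{\mathrm{sup},n}.
\]
This uses the basic inequality $\widehat{\mathcal R}(\hat\theta)\le\widehat{\mathcal R}(\theta_n^*)+\delta_{\mathrm{sup},n}$. The delicate points are two. First, the labeled-unit normalization $1/\sum R_i$ is random; it concentrates, so I will replace it by $1/\Pr(R=1)$ at cost $O_P(n^{-1/2})$ times a bounded term, which is absorbed into the deviation bound. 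Second, the $\log$-factor bookkeeping must produce $(\log n)^3$.

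\textbf{Step 4: marginal heads.} Conditional on the training-fold fit, treat $\hat q_w$ as a fixed generated target and decompose
\[
\hat q_w=\mu_0(w,\cdot)+(\tilde\mu_0-\mu_0)+(\hat q_w-\tilde\mu_0).
\]
The middle term has conditional mean zero given $(W=w,X)$ and is bounded, so this is a regression of $\hat q_w$ on $X$ over $\mathcal F_{m,w}$ with optimization gap $\delta_{\mu(w),n}$. The population minimizer is $\E[\hat q_w\mid X,W=w]$, which lies within $L^2(P_w)$-distance $\rho_{\tilde\mu(w)}$ of $\mu_0$ by Jensen.

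Because $\hat q_w$ depends on the same sample, I cannot condition naively. I will instead run the empirical-process bound uniformly over targets in the sieve class of $q$-composites, which costs only another $\mathcal C_n$ factor in entropy. Approximation is as in Step 2, and $P_w$-norms convert to $P$-norms by overlap. This yields the stated bound, with $\delta_{\mathrm{sup},n}$ entering through $\rho_{\tilde\mu(w)}^2$.

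\textbf{Conclusion.} With the choice of $m_n$, both $m_n^{-2s_{\mathrm{comp}}}$ and $\mathcal C_n/n$ equal $\{n/(\log n)^3\}^{-2s_{\mathrm{comp}}/(2s_{\mathrm{comp}}+1)}$. Since $s_{\mathrm{comp}}>1/2$, this is $o(n^{-1/2})$. Adding gaps of order $o_P(n^{-1/2})$ gives $\rho_g=o_P(n^{-1/4})$ for every head. Products of such errors are then $o_P(n^{-1/2})$, so Theorem~\ref{thm:km-oracle} applies.
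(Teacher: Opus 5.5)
Your architecture matches the paper's proof. You lower-bound the joint population excess risk $\Delta(\theta)=\mathcal R_{\mathrm{sup}}(\theta)-\mathcal R_{\mathrm{sup}}(\theta_0)$ by the summed squared errors via Pinsker and positivity. You build one composite approximant by Lipschitz propagation through $b_0,g_0$, and localize a Bernstein/union bound over a joint cover of entropy $O(m_n(\log n)^3)$. You treat the marginal heads with a bound uniform over $\mathcal Q_{m,w}\times\mathcal F_{m,w}$, which is Lemma~\ref{lem:same-fold-marginal}.

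However, your handling of the random normalization $1/\sum_iR_i$ fails as written. Replacing it by $1/\{n\Pr(R=1)\}$ ``at cost $O_P(n^{-1/2})$ times a bounded term'' is an additive $O_P(n^{-1/2})$ perturbation of the empirical risk. Since $s_{\mathrm{comp}}>1/2$, both $m_n^{-2s_{\mathrm{comp}}}$ and $\mathcal C_n/n$ are $o(n^{-1/2})$. So this term cannot be absorbed into $\tfrac14\Delta(\theta)+C\mathcal C_n/n$. It would break the stated bound and leave only $\rho_g=O_P(n^{-1/4})$, which is insufficient for~\eqref{eq:product-rate}.

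The fix uses the fact that the factor is common to every $\theta$. In the basic inequality it multiplies only the labeled-loss \emph{difference} between $\hat\theta$ and $\theta_n^*$. It therefore acts as a random weight $\lambda_Y n\Pr(R=1)/\sum_iR_i$ that lies in a fixed compact interval with probability tending to one. You can either condition on $\sum_iR_i$, as the paper does, or first bound that difference by the localized quantity $\Delta(\hat\theta)+\Delta(\theta_n^*)+\mathcal C_n/n$ and only then multiply by $O_P(n^{-1/2})$.

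Second, your comparator $\theta_n^*$ must lie in $\Theta_n$, because $\delta_{\mathrm{sup},n}$ and $\delta_{\mu(w),n}$ are measured against infima over the sieve, whose parameters are uniformly bounded. Yarotsky's construction (Lemma~\ref{lem:yarotsky}) does not control weight magnitudes: its partition-of-unity layer has weights of the order of the grid size. This is why the paper uses the bounded-weight construction of \citet[Theorem~5]{SchmidtHieber2020DNN}. Alternatively, you would have to rescale large weights through $O(\log m_n)$ additional bounded-weight layers.

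In the propagation step, clip the approximated encoder coordinates to $[-1,1]$, so the heads are evaluated where the true heads' Sobolev bounds hold. Use only the Lipschitz constants of the true $g_0$ and heads, not those of the approximants.

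Finally, truncating the sub-Gaussian residuals at $\sqrt{\log n}$ inflates Bernstein's range term. This leaves an extra logarithmic factor beyond $\mathcal C_n/n$. It does not affect the $o_P(n^{-1/4})$ conclusion, but it does not reproduce the stated bound. Control the noise cross term with conditional sub-Gaussian tail bounds instead, as in Lemma~\ref{lem:oracle-ineq}.
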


Theorem~\ref{thm:composite-direct-rate} separates approximation,
statistical complexity, and the two kinds of optimization gap. The 
smoothness threshold $s_{\mathrm{comp}}>1/2$ constrains both learned
encoders and nuisance heads. As long as we train the model to satisfy the optimization conditions, the estimator is asymptotically normal and efficient. Nevertheless, as we have emphasized, the theorem is conditional on the stated gap rates.  It does not establish those rates for ordinary gradient-based SIRNET training, which is a separate question.

In particular, a key component that we have adopted in the training process is the stop-gradient operation on the generated targets. Our marginalization objective can be interpreted as a conditional-expectation projection with generated targets. For each treatment arm, the observed-outcome regression supplies detached pseudo-outcomes, which are used to learn their conditional expectation given baseline covariates. This directional dependence is analogous to detached-target recursive learning in DeepBlip \citep{ma2025deepblip}. At the population level, the marginal regression error is bounded by the outcome-regression error plus the marginal projection error. The former is established in the first part of Theorem \ref{thm:composite-direct-rate} for the outcome head error term. Our same-fold generated-regressor lemma (Lemma \ref{lem:same-fold-marginal}) controls the latter under explicit approximation, complexity, and optimization conditions. Stop-gradient prevents direct differentiation through the generated targets, but does not prevent marginalization-loss updates to the shared encoder from affecting subsequent outcome predictions. Accordingly, our theoretical results provide conditional statistical guarantees rather than a convergence guarantee for the joint gradient-based training dynamics.

\section{More on A4 study}
\subsection{A4 study acknowledgement statement}
The A4 Study is a secondary prevention trial in preclinical Alzheimer's disease, aiming to slow cognitive decline associated with brain amyloid accumulation in clinically normal older individuals. The A4 Study is funded by a public-private-philanthropic partnership, including funding from the National Institutes of Health-National Institute on Aging, Eli Lilly and Company, Alzheimer's Association, Accelerating Medicines Partnership, GHR Foundation, an anonymous foundation and additional private donors, with in-kind support from Avid and Cogstate. The companion observational Longitudinal Evaluation of Amyloid Risk and Neurodegeneration (LEARN) Study is funded by the Alzheimer's Association and GHR Foundation. The A4 and LEARN Studies are led by Dr. Reisa Sperling at Brigham and Women's Hospital, Harvard Medical School and Dr. Paul Aisen at the Alzheimer's Therapeutic Research Institute (ATRI), University of Southern California. The A4 and LEARN Studies are coordinated by ATRI at the University of Southern California, and the data are made available through the Laboratory for Neuro Imaging at the University of Southern California. The participants screening for the A4 Study provided permission to share their de-identified data in order to advance the quest to find a successful treatment for Alzheimer's disease. We would like to acknowledge the dedication of all the participants, the site personnel, and all of the partnership team members who continue to make the A4 and LEARN Studies possible. The complete A4 Study Team list is available on: \url{www.actcinfo.org/a4-study-team-lists}.

\subsection{A4 Overlap and Missingness Sensitivity Diagnostics}
\label{app:a4-diagnostics}

The conditional MAR assumption cannot be tested from the observed data.  We
therefore report diagnostics that check its empirical implications and the
stability of the weighting correction. 

\paragraph{Missingness reasons and death.}
Table~\ref{tab:a4-disposition} breaks down the 332 participants with missing
month~66 WMH by their recorded reason for study discontinuation. Subject withdrawal (172) and adverse events (51) account for
most of the missingness; 11 participants (6 solanezumab, 5 placebo) have a
recorded death before month~66. Because $Y_i(w)=\log(\mathrm{WMH}_i(w)+1)$ is
not well defined for a participant who dies before the month~66 visit under
arm $w$, treating these 11 cases as ordinary missing-at-random nonresponse is less appropriate. Nevertheless, with only 11 of 1129 participants (1.0\%)
affected, this choice is unlikely to materially change the point estimate, but
we flag it explicitly rather than leave death implicitly folded into
``dropout.''

\begin{table}[h]
\centering
\caption{Reason for study discontinuation among
  the 332 A4 analysis participants missing month~66 WMH.}
\label{tab:a4-disposition}
\small
\begin{tabular}{@{}lr@{}}
\toprule
Reason & $n$ \\
\midrule
Withdrawal by subject         & 172 \\
Adverse event                 & 51 \\
Completed (visit not reached) & 31 \\
Other                         & 26 \\
No disposition record         & 13 \\
Death                         & 11 \\
Safety risk                   & 7 \\
Lack of efficacy              & 7 \\
Lost to follow-up             & 6 \\
Physician decision            & 5 \\
Withdrawal by parent/guardian & 2 \\
Protocol deviation            & 1 \\
\midrule
Total                         & 332 \\
\bottomrule
\end{tabular}
\end{table}

The diagnostic refit uses one five-fold cross-fit of the selected architecture
and the $n=1129$ analysis sample.  Outcome observation rates are nearly equal
in the control and treatment arms (70.5\% and 70.7\%, respectively).  As shown
in Figure~\ref{fig:a4-overlap}, the estimated treatment propensity has broad
support (1st--99th percentile: 0.175--0.841).  The observed-arm labeling
propensity has a median of 0.740, although its minimum is 0.012.  Consequently,
the observed-case weights have a long right tail: the 99th percentile is 15.0
and the maximum is 104.0.  Therefore, we adopted proper trimming in the implementation to mitigate the outlier weights and keep the estimator stable. Table~\ref{tab:a4-rclip-sensitivity} shows that, over labeling-propensity
floors from 0.05 to 0.15, the point estimate ranges from $-0.013$ to $-0.008$
and every interval includes zero.  Lowering the floor to 0.01 increases the
standard error and moves the point estimate to $-0.001$, illustrating the
influence of the extreme tail without changing the qualitative conclusion.

\begin{figure*}[t]
  \centering
  \includegraphics[width=\textwidth]{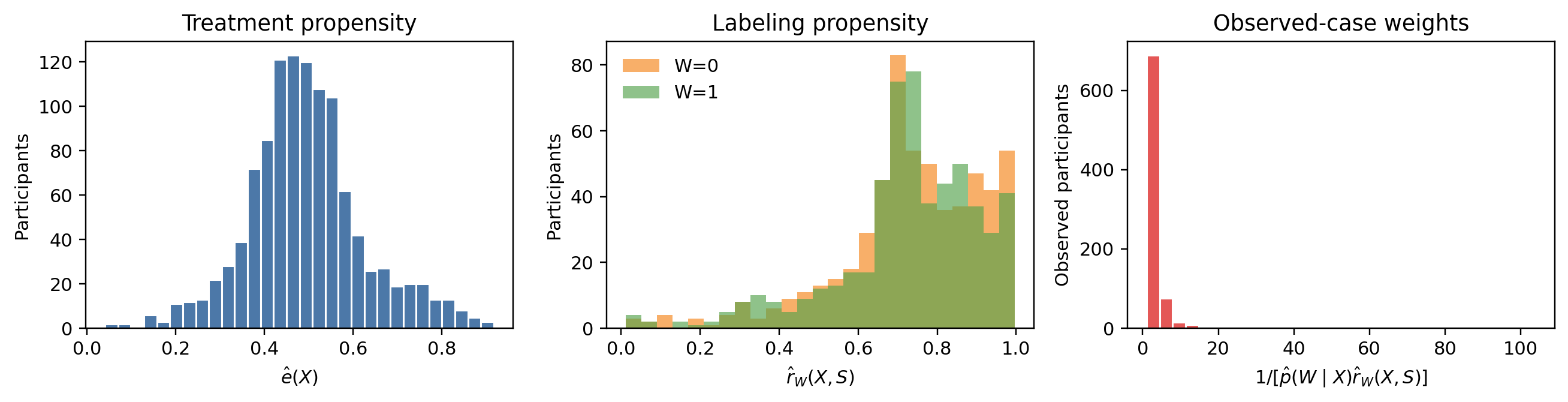}
  \caption{Out-of-fold overlap diagnostics from the five-fold A4 refit.
  Left: estimated treatment propensity.  Centre: estimated probability that
  the month~66 outcome is observed under the received arm.  Right:
  observed-case treatment--labeling weights.}
  \label{fig:a4-overlap}
\end{figure*}

\begin{table}[htbp!]
\centering
\caption{Sensitivity of a five-fold SIRNET diagnostic refit to the floor
  imposed on the estimated labeling propensity.}
\label{tab:a4-rclip-sensitivity}
\begin{tabular}{@{}rrrr@{}}
\toprule
$r_{\mathrm{clip}}$ & $\hat\tau$ & SE & 95\% CI \\
\midrule
0.01 & $-0.001$ & $0.045$ & $[-0.090,\;+0.087]$ \\
0.05 & $-0.013$ & $0.041$ & $[-0.093,\;+0.068]$ \\
0.10 & $-0.011$ & $0.038$ & $[-0.086,\;+0.063]$ \\
0.15 & $-0.008$ & $0.036$ & $[-0.079,\;+0.064]$ \\
\bottomrule
\end{tabular}
\end{table}

\section{Computational aspects}
\label{app:computational-aspects}

\paragraph{Time.}
Let $n$ be the number of observations, $P$ the number of trainable SIRNET
parameters, $E$ the epochs actually run (bounded by the configured maximum),
and $K$ the number of cross-fitting folds. For the dense feedforward network,
one forward/backward pass costs $O(P)$ per observation, up to the fixed
number of nuisance heads. Adam training on one fold therefore costs
$O(EnP)$ in the worst case; held-out prediction and score aggregation cost
$O(nP)$ and $O(n)$, respectively. A standard $K$-fold round fits $K$
networks on approximately $n(K-1)/K$ observations each, giving
$O(E(K-1)nP)$ training time, plus fold-local preprocessing. Repeating the
round $R$ times multiplies this cost by $R$. For nested tuning with $C$
candidate configurations and $J$ inner folds, each outer fold fits $CJ$
inner models and refits one selected model. The implementation thus trains
$RK(CJ+1)$ networks; for the A4 full-score analysis, $R=5$, $K=5$,
$C=3$, and $J=2$ give 175 fits. These counts are an upper-bound guide to
workload, not measured wall-clock times: early stopping, candidate widths,
PCA, and hardware affect actual runtime. Each split fits its scalers and,
for A4, the baseline and surrogate PCA maps on training data only.

\paragraph{Space and sample size.}
The fitted network and Adam's parameter and moment arrays require $O(P)$
storage, while minibatch activations scale with batch size and layer widths.
Holding the preprocessed sample and out-of-fold scores requires space linear
in $n$ and the input dimensions; the A4 pipeline also holds the raw imaging
embeddings and PCA work arrays during preprocessing. Models are fitted one
split at a time rather than keeping all $RK(CJ+1)$ models in memory. An
outer fit uses approximately $n(K-1)/K$ units before its early-stopping
split; an inner candidate fit uses approximately
$n(K-1)(J-1)/(KJ)$ units before that split. With $n=1129$, $K=5$,
$J=2$, and a 15\% validation fraction, these are roughly 770 and 380
training units, respectively. Outcome-head training uses only observed
outcomes in the relevant treatment arm, so its effective sample is smaller
still (the A4 outcome is observed for 797 of 1129 analysis participants).

\paragraph{Computing infrastructure.}
The experiment server documented for this project has one NVIDIA H100 GPU
with 80\,GB of memory. Its workspace allocation is limited to approximately
14 CPU cores and 146\,GiB of RAM. The neural simulation launch scripts
request CUDA for SIRNET fits, whereas classical and XGBoost baseline jobs
use bounded CPU parallelism. A4 SIRNET runs also
request CUDA; the training code falls back to CPU if a GPU is unavailable.
GPU access depends on the execution environment: the managed sandbox does
not expose the host GPU, so a CUDA job must run in a GPU-enabled allocation.
The recorded configuration describes the project server, not a per-run
hardware log or a measured runtime for each reported experiment.

\section{Hyperparameter Tuning}
\label{app:hparam}

\subsection{SIRNET}

SIRNET has architecture parameters that control the shared baseline and
surrogate embeddings, and loss weights
$(\lambda_Y,\lambda_R,\lambda_W,\lambda_M)$ that balance the four jointly
trained heads.  

\paragraph{Nested tuning for the full score.}
The full-score A4 analysis uses nested tuning rather than a global validation
screen.  For each outer $K=5$ training partition, we evaluate the three fixed
candidate configurations in Table~\ref{tab:hparam-search} by two-fold inner
cross-fitting.  The inner criterion is the standard deviation of the
out-of-fold per-unit orthogonal scores.  We then refit the selected
configuration on the full outer training partition (with a training-only
early-stopping split) and evaluate it only on the untouched outer fold.  Thus a
participant's outcome and influence contribution cannot affect the
configuration used for that participant's final score.  The procedure is
repeated for five outer $K=5$ rounds.  Early stopping uses at most 300 epochs
with patience 40, and all configurations use the response-probability score
floor $r_{\mathrm{clip}}=0.05$.

The compact regularized candidate is selected in 22 of the 25 outer folds;
the balanced and outcome-bridge candidates are selected in 2 and 1 folds,
respectively.  The surrogate-only analysis remains a separately reported
bootstrap-based descriptive estimator and does not supply the full-score
orthogonal-inference result.

\begin{table*}[h]
\centering
\caption{Fixed candidate configurations for nested tuning of the A4
  full SIRNET score.  Each outer fold is scored only after selecting one row
  using inner-fold predictions from its outer training partition.  The final
  column reports selections across five repeated $K=5$ outer splits.}
\label{tab:hparam-search}
\scriptsize
\setlength{\tabcolsep}{3pt}
\resizebox{\textwidth}{!}{%
\begin{tabular}{@{}lrrrrrrrr@{}}
\toprule
Candidate & $d_S$ & $d_B$ & $(d_b,d_h)$ & Dropout & $\eta$ & $\lambda_{\mathrm{wd}}$ & $(\lambda_Y,\lambda_R,\lambda_W,\lambda_M)$ & Selected \\
\midrule
Balanced                   & 96 & 96 & $(64,128)$ & 0.30 & $10^{-3}$ & $10^{-4}$ & $(5,.5,.25,1)$ & 2 \\
Outcome bridge             & 96 & 96 & $(64,128)$ & 0.20 & $10^{-3}$ & $5\!\times\!10^{-4}$ & $(10,.5,.25,2)$ & 1 \\
Compact regularized        & 64 & 64 & $(32,64)$  & 0.30 & $10^{-3}$ & $10^{-3}$ & $(5,.5,.25,1)$ & 22 \\
\bottomrule
\end{tabular}
}
\end{table*}

\paragraph{Final seed averaging and inference.} To reduce variance from randomness in neural network training, we employ a five-seed averaging strategy.
For full SIRNET, we average the out-of-fold influence values over five
independently seeded, nested $K=5$ cross-fitting rounds before computing the
ATE and its influence-function SE. For Surr-Only [SIRNET], which uses
bootstrap inference, the point estimate is the mean of five independently
initialized shared-network fits. Each of the $B=200$ paired bootstrap draws
repeats the same five-seed averaging, so its bootstrap SE incorporates both
sampling variation and the observed optimization sensitivity.

\subsection{Baseline nuisance estimators and XGBoost selection}

Ridge baselines use fixed configurations.  For the direct DML-XS--SIRNET
comparisons in Tables~\ref{tab:mc-main} and~\ref{tab:semisynthetic}, however,
we tune the full XGBoost nuisance pipeline with the same split discipline as
the causal estimator: profile selection occurs inside each outer training
partition, and the selected pipeline is refit before scoring the untouched
outer fold.  

\paragraph{Ridge (linear nuisances).}
Outcome models $\mu_w(X)$ and $\tilde\mu_w(X,S)$ use
\texttt{sklearn} \texttt{Ridge}($\alpha=1.0$) with $z$-scored inputs.
Selection models $r_w(X)$ and $r_w(X,S)$ use \texttt{LogisticRegression}
with $\ell_2$ penalty ($C=1.0$) and standardised inputs.

\paragraph{XGBoost.}
For the nested DML-XS rows, each candidate specifies the complete nuisance
pipeline: treatment and labeling classifiers, observed-outcome regressions,
and the generated marginal regressions all use the same profile.  In every
outer fold, we compare the three profiles in Table~\ref{tab:xgb-profiles} by
the standard deviation of their inner out-of-fold orthogonal scores, then
refit the winner on the outer training data.  %

\begin{table}[h]
\centering
\caption{Fixed XGBoost candidate profiles used for fold-local selection in
the Table~\ref{tab:mc-main} and Table~\ref{tab:semisynthetic} DML-XS rows.
All profiles use \texttt{subsample}=\texttt{colsample\_bytree}=0.8 and
\texttt{n\_jobs}=1.}
\label{tab:xgb-profiles}
\small
\begin{tabular}{@{}lrrrl@{}}
\toprule
Profile & Trees & Depth & Learning rate & Additional setting \\
\midrule
Shallow & 100 & 3 & 0.10 & --- \\
Moderate & 300 & 4 & 0.05 & --- \\
Deep & 500 & 5 & 0.03 & \texttt{min\_child\_weight}=3 \\
\bottomrule
\end{tabular}
\end{table}


\paragraph{Bootstrap standard errors.}
Imputation-only methods (RI-X, RI-XS, pooled surrogate-prediction contrast, Surr-Only) use
$B=30$ bootstrap resamples for variance estimation in the MC and semisynthetic
experiments, and $B=200$ in the A4 empirical application.  The simulation
choice is computational: every bootstrap draw refits the full $K$-fold nuisance
pipeline, nested inside 300 Monte Carlo replications, two signal levels, two
DGPs, and two learner families.  The point estimates, empirical SD, and RMSE do
not depend on $B$; only the replicate-specific bootstrap interval does. 




\section{Additional Technical Results and Proof}

\subsection{Robustness of the efficient score}

In this section, we establish the robustness properties of the efficient score under various nuisance function misspecifications.

\begin{proposition}[Robustness of the full score]
\label{prop:score-robustness}
Under Assumptions~\ref{ass:ignorability} and~\ref{ass:missing-at-random},
let $(\tilde\mu^\star,\mu^\star,e^\star,r^\star)$ be probability limits of
the cross-fitted nuisances, with $e^\star(X)$, $1-e^\star(X)$, and
$r^\star(w,X,S)$ bounded away from zero for both arms.
Assume the expectations below are finite.
For $w\in\{0,1\}$ define
\[
 e_{0w}(X)=\Pr(W=w\mid X),\qquad
 e^\star_w(X)=w e^\star(X)+(1-w)\{1-e^\star(X)\},
\]
and let
\[
 \mu_{0w}(X)=\E\{\tilde\mu_0(w,X,S)\mid W=w,X\}.
\]
The population bias of the score at the true ATE is
\begin{align}
 &\E\{\psi(O;\tau_0,\bfeta^\star)\}\nonumber\\
 &=\sum_{w=0}^1(2w-1)\E\Bigg[
 \left\{1-\frac{e_{0w}(X)}{e^\star_w(X)}\right\}
 \{\mu^\star(w,X)-\mu_{0w}(X)\}\nonumber\\
 &+\frac{e_{0w}(X)}{e^\star_w(X)}
 \E\left[\left.
 \left\{\frac{r_0(w,X,S)}{r^\star(w,X,S)}-1\right\}
 \{\tilde\mu_0(w,X,S)-\tilde\mu^\star(w,X,S)\}
 \right|W=w,X\right]\Bigg].
 \label{eq:score-bias}
\end{align}
Consequently, a sufficient condition for consistency is that, in each arm,
\[
 \{e^\star=e_0\ \text{or}\ \mu^\star(w,\cdot)=\mu_{0w}\}
 \quad\text{and}\quad
 \{r^\star(w,\cdot)=r_0(w,\cdot)\ \text{or}\
   \tilde\mu^\star(w,\cdot)=\tilde\mu_0(w,\cdot)\}.
\]

\end{proposition}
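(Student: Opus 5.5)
We prove the identity by direct iterated conditioning, one arm at a time. Write $\psi(O;\tau_0,\bfeta^\star)=\sum_{w}(2w-1)A_w-\tau_0$. Here
\[
A_w=\mu^\star(w,X)+\frac{\ind{W=w}}{e^\star_w(X)}\Bigl\{\frac{R\,(Y-\tilde\mu^\star(w,X,S))}{r^\star(w,X,S)}+\tilde\mu^\star(w,X,S)-\mu^\star(w,X)\Bigr\}.
\]
Since $\tau_0=\E[\mu_{01}(X)-\mu_{00}(X)]$, it suffices to show, for each $w$, that $\E[A_w]-\E[\mu_{0w}(X)]$ equals the $w$-th bracket in \eqref{eq:score-bias}. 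This identification of $\tau_0$ follows from Assumptions~\ref{ass:ignorability} and~\ref{ass:missing-at-random}: ignorability gives $\E[Y(w)\mid X]=\E[Y\mid W=w,X]$. MAR gives $\E[Y\mid W=w,X,S]=\tilde\mu_0(w,X,S)$, and this also holds on $R=1$. Then $\E[Y(w)\mid X]=\mu_{0w}(X)$.

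The key steps, in order, are as follows.

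First, condition the inner residual term on $(W=w,X,S)$. By MAR, $\E[RY\mid W=w,X,S]=r_0\tilde\mu_0$ and $\E[R\mid W=w,X,S]=r_0$. Hence
\[
\E\Bigl[\tfrac{R(Y-\tilde\mu^\star)}{r^\star}\,\Big|\,W=w,X,S\Bigr]=\tfrac{r_0}{r^\star}(\tilde\mu_0-\tilde\mu^\star).
\]
Adding $\tilde\mu^\star-\mu^\star$ and subtracting and adding $\tilde\mu_0$, the braced term has conditional mean
\[
\Bigl(\tfrac{r_0}{r^\star}-1\Bigr)(\tilde\mu_0-\tilde\mu^\star)+\tilde\mu_0-\mu^\star(w,X).
\]
Second, condition on $(W=w,X)$ and integrate $S$ over its arm-$w$ law. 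The term $\tilde\mu_0$ becomes $\mu_{0w}(X)$ by definition, and the product term becomes the inner conditional expectation in \eqref{eq:score-bias}. Third, condition on $X$: the factor $\ind{W=w}$ contributes $e_{0w}(X)$. Altogether,
\[
\E[A_w\mid X]=\mu^\star+\frac{e_{0w}}{e^\star_w}\bigl(\mu_{0w}-\mu^\star\bigr)+\frac{e_{0w}}{e^\star_w}\,\E[\cdots\mid W=w,X].
\]
Subtracting $\mu_{0w}$ and regrouping gives $\mu^\star-\mu_{0w}-\frac{e_{0w}}{e_w^\star}(\mu^\star-\mu_{0w})=\{1-e_{0w}/e^\star_w\}(\mu^\star-\mu_{0w})$. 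Taking the outer expectation and summing with signs $(2w-1)$ yields \eqref{eq:score-bias}. Boundedness of $e^\star$ and $r^\star$ away from zero, together with the finiteness assumption, justifies every use of iterated expectations and Fubini.

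For the consequence, each summand vanishes under the stated disjunctions. If $e^\star=e_0$, then $e^\star_w=e_{0w}$ and the first factor is zero; otherwise $\mu^\star(w,\cdot)=\mu_{0w}$ kills it. Likewise, either $r^\star=r_0$ or $\tilde\mu^\star=\tilde\mu_0$ kills the second product. The score is linear in $\tau$ with coefficient $-1$, so the population root of $\E\psi(O;\tau,\bfeta^\star)=0$ is exactly $\tau_0$. Consistency of the cross-fitted estimator then follows from a standard argument: a law of large numbers on held-out folds plus convergence of the nuisances to their limits, with the ratios bounded by the overlap floors.

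The step I expect to need most care is the second one. There, $\tilde\mu_0$ is defined on the labeled subpopulation, and one must check that $\E[\tilde\mu_0(w,X,S)\mid W=w,X]$ is the $\mu_{0w}$ appearing in the identification of $\tau_0$, with no stray $R$-conditioning. This is exactly where MAR and the arm-specific $S$ distribution enter, and I would verify it explicitly before the algebra.
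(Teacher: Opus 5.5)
Your proposal is correct and follows essentially the same route as the paper's proof. Both arguments work arm by arm: use $\E\{R(Y-\tilde\mu^\star)\mid W=w,X,S\}=r_0(\tilde\mu_0-\tilde\mu^\star)$, integrate $S$ over its arm-$w$ law, pick up the factor $e_{0w}/e^\star_w$ from conditioning on $X$, regroup into $\{1-e_{0w}/e^\star_w\}(\mu^\star-\mu_{0w})$ plus the product term, and finish with the identification $\tau_0=\E\{\mu_{01}(X)-\mu_{00}(X)\}$.

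Two minor remarks. The identity $\E[RY\mid W=w,X,S]=r_0\tilde\mu_0$ holds by the definition of $\tilde\mu_0$, so MAR is really needed only to identify $\tau_0$. Your closing sketch of estimator consistency goes beyond what the paper proves, which is only that the population bias vanishes.
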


Proposition~\ref{prop:score-robustness} shows that the efficient score is robust to certain types of nuisance function misspecifications. In particular, it remains consistent as long as either the propensity score or the outcome regression is correctly specified for each treatment arm, and either the missingness mechanism or the imputed outcome is correctly specified for each arm. 

\subsection{Collection of technical lemmas and proofs}

\paragraph{Approximation and complexity for the learned joint sieve.}
Yarotsky's approximation result (Lemma~\ref{lem:yarotsky}) explains
the Sobolev exponent; the bounded-weight construction of
\citet[Theorem~5]{SchmidtHieber2020DNN} supplies approximants for the
actual sieve. Lemma~\ref{lem:covering-number} controls complexity, and
Lemma~\ref{lem:oracle-ineq} supplies the direct-head risk bound.

\begin{lemma}[{ReLU approximation of Sobolev balls; \citealp[Theorem~1]{Yarotsky2017DNN}}]
\label{lem:yarotsky}
Fix $d\ge1$, integer $p\ge1$, and $0<K<\infty$. There is a constant
$C_{d,p,K}<\infty$ such that for every $f_0\in W^{p,\infty}([-1,1]^d)$ with
$\|f_0\|_{W^{p,\infty}}\le K$ and every $\epsilon\in(0,1)$, there exists a
feedforward ReLU network $f$ with depth at most
$C_{d,p,K}(\log(1/\epsilon)+1)$ and at most
$C_{d,p,K}\epsilon^{-d/p}(\log(1/\epsilon)+1)$ weights and computation
units, such that $\sup_{x\in[-1,1]^d}|f(x)-f_0(x)|\le\epsilon$.
\end{lemma}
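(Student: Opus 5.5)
Since this lemma is quoted from \citet[Theorem~1]{Yarotsky2017DNN}, the plan is to reproduce Yarotsky's constructive argument: a local Taylor approximation glued together by a ReLU-representable partition of unity, with every product realized by approximate ReLU multiplication. First, I rescale $[-1,1]^d$ affinely to $[0,1]^d$. This changes $K$ only by a factor depending on $p$, and costs one extra linear layer. Fix a grid resolution $N$ and, for $m\in\{0,\ldots,N\}^d$, set $\phi_m(x)=\prod_{k=1}^d\psi\bigl(3N(x_k-m_k/N)\bigr)$. Here $\psi$ is the trapezoid bump equal to $1$ on $[-1,1]$ and $0$ outside $(-2,2)$, which is exactly a combination of three ReLUs. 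The $\phi_m$ form a partition of unity, each $x$ lies in the support of at most $2^d$ of them, and $\operatorname{supp}\phi_m$ is contained in a cube of side $O(1/N)$ around $m/N$.

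Next, let $P_m(x)=\sum_{|\alpha|<p}D^\alpha f_0(m/N)(x-m/N)^\alpha/\alpha!$ be the Taylor polynomial of $f_0$ at $m/N$, and put $f_1=\sum_m\phi_mP_m$. By Taylor's theorem with the $W^{p,\infty}$ bound, $|f_0-P_m|\le C_{d,p}KN^{-p}$ on $\operatorname{supp}\phi_m$. Since $\sum_m\phi_m=1$ and at most $2^d$ terms are active, this gives $\|f_0-f_1\|_\infty\le 2^dC_{d,p}KN^{-p}$. I choose $N\asymp(\epsilon/K)^{-1/p}$ so that this error is at most $\epsilon/2$. The Taylor coefficients $a_{m,\alpha}=D^\alpha f_0(m/N)/\alpha!$ are bounded by $K$, and there are at most $\binom{d+p-1}{d}(N+1)^d=O(\epsilon^{-d/p})$ pairs $(m,\alpha)$.

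Then I replace each monomial product $\phi_m(x)(x-m/N)^\alpha$, which has at most $d+p-1$ factors each bounded by $1$, with a network built from an approximate multiplier. The square function on $[0,1]$ is approximated by $f_s(x)=x-\sum_{t=1}^s4^{-t}g_t(x)$, where $g_t$ is the $t$-fold composition of the ReLU tent map; this has error $2^{-2s-2}$, depth $O(s)$ and $O(s)$ weights. Polarization, $xy=\tfrac12\{(x+y)^2-x^2-y^2\}$ after rescaling into $[0,1]$, then gives a multiplier $\tilde\times$ on $[-M,M]^2$ with accuracy $\delta$ and depth $O(\log(1/\delta))$. Chaining $\tilde\times$ over the factors gives $\tilde f_{m,\alpha}$. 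Each $\tilde f_{m,\alpha}$ vanishes wherever $\phi_m$ does, because $\tilde\times(0,y)=0$ holds exactly for this construction. The output network is $f=\sum_{m,\alpha}a_{m,\alpha}\tilde f_{m,\alpha}$, built as parallel subnetworks followed by a final linear layer. Its depth is $O(\log(1/\delta))$ and its size is $O(\epsilon^{-d/p}\log(1/\delta))$.

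The main obstacle is error propagation through the chained approximate multiplications. This needs three things:
\begin{itemize}
\item Intermediate values must stay in a bounded range. I would handle this by clipping to $[-1,1]$, which is ReLU-representable.
\item Each chained product must inherit accuracy $O((d+p)\delta)$. This follows by induction, using the Lipschitz bound of $\tilde\times$ on bounded inputs.
\item The total error must be at most $\epsilon/2$. Only $2^d\binom{d+p-1}{d}$ terms are nonzero at any given $x$, thanks to the exact-zero property, and each has coefficient at most $K$. So the total is $O(K2^d(d+p)^{p+d}\delta)$, and choosing $\delta\asymp\epsilon/K$ makes it at most $\epsilon/2$.
\end{itemize}
Since $\log(1/\delta)=O(\log(1/\epsilon)+1)$, the depth and weight bounds take the stated form with a constant $C_{d,p,K}$.
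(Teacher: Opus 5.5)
Your proposal is correct and is essentially Yarotsky's own proof of his Theorem~1: a partition of unity on an $N$-grid, local Taylor polynomials, and a chained approximate multiplier built from the sawtooth approximation of $x^2$. That is exactly what the paper relies on, since it states the lemma by citation and gives no proof of its own. Two cosmetic corrections: the trapezoid $\psi$ needs four ReLUs, $\psi(t)=\mathrm{ReLU}(t+2)-\mathrm{ReLU}(t+1)-\mathrm{ReLU}(t-1)+\mathrm{ReLU}(t-2)$, and the exact identity $\tilde\times(0,y)=\tilde\times(x,0)=0$ that you use to localize the error requires the absolute-value rescaling $\tilde\times(x,y)=2M^2\{f_s(|x+y|/(2M))-f_s(|x|/(2M))-f_s(|y|/(2M))\}$ together with $f_s(0)=0$, not an arbitrary affine map into $[0,1]$.
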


Lemma~\ref{lem:yarotsky} is a statement purely about what a ReLU network
can represent; it involves no data, no randomness, and no estimation. To account for
the statistical error of estimating a function from data, we need to bound the
covering number of the class of networks of a given size. The following lemma
gives a covering-number bound for a class of ReLU networks with bounded depth,
width, and parameter magnitude. 

For the bounded-weight sieve used below we use the constructive approximation
bound of \citet[Theorem~5]{SchmidtHieber2020DNN}, rather than infer a weight
bound from Lemma~\ref{lem:yarotsky}. Applied to both trainable encoders and
all seven heads on their respective bounded input domains, it yields the
joint composite approximation and entropy bounds in the proof of
Theorem~\ref{thm:composite-direct-rate}.

\begin{lemma}[Covering number of a ReLU network class]
\label{lem:covering-number}
For integers $L,W\ge1$ and $B\ge1$, let $\mathcal F(L,W,B)$ denote the
functions $[-1,1]^d\to\R$ computed by a feedforward ReLU network of a fixed
architecture with $L$ hidden layers and $W$ total (weight and bias)
parameters, each parameter ranging over $[-B,B]$. Then for every
$\delta\in(0,1]$,
\[
  \log\mathcal N\bigl(\delta,\mathcal F(L,W,B),\|\cdot\|_\infty\bigr)
  \;\le\; W(L+4)\log\bigl(B(W+1)\bigr) + W\log(4B/\delta).
\]
If $\mathcal F_s(L,W,B)$ restricts this architecture to at most
$s\in\{1,\ldots,W\}$ nonzero parameters, then
\[
 \log\mathcal N\bigl(\delta,\mathcal F_s(L,W,B),\|\cdot\|_\infty\bigr)
 \le s\log(eW/s)+s(L+4)\log\bigl(B(W+1)\bigr)
      +s\log(4B/\delta).
\]
\end{lemma}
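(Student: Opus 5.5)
The plan is to prove Lemma~\ref{lem:covering-number} by a parameter-discretization argument: for a fixed architecture, the map from parameter vectors to network functions is Lipschitz in sup-norm over $[-1,1]^d$. A grid on the parameter cube then yields a sup-norm cover of $\mathcal F(L,W,B)$.

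The first step is the Lipschitz bound. Write the network as $f_\theta(x)=A_{L+1}\sigma(A_L\cdots\sigma(A_1x+b_1)\cdots)+b_{L+1}$, with all entries in $[-B,B]$. By induction over layers, using that ReLU is $1$-Lipschitz and that each row has at most $W$ nonzero entries, the hidden activations satisfy $\|h_\ell\|_\infty\le (BW+1)^\ell$; the input contributes a factor $1$ because $x\in[-1,1]^d$.

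Next, take parameters $\theta,\theta'$ with $\|\theta-\theta'\|_\infty\le\eta$ and telescope over layers. Perturbing layer $\ell$ alone changes that layer's output by at most $\eta W (BW+1)^{\ell-1}$, and subsequent layers amplify this by at most $(BW)^{L+1-\ell}$. Summing over the $L+1$ layers gives
\[
\|f_\theta-f_{\theta'}\|_\infty\le \eta\,(L+1)\,W\,(B(W+1))^{L+1}\le \eta\,(B(W+1))^{L+4}.
\]
The last inequality absorbs $(L+1)W$ into two extra powers of $B(W+1)\ge 2$, using $L+1\le 2^{L}$ loosely. The exact exponent is where bookkeeping must be done carefully, but any bound of the form $(B(W+1))^{L+c}$ with $c\le4$ suffices.

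Set $\eta=\delta/(B(W+1))^{L+4}$ and grid each of the $W$ coordinates of $[-B,B]$ with mesh $2\eta$. This needs at most $\lceil B/\eta\rceil\le 2B/\eta$ points per coordinate, so $(2B/\eta)^W$ grid points in total. Taking logarithms gives $W\log(2B/\delta)+W(L+4)\log(B(W+1))$, which is dominated by the stated bound with $4B$ in place of $2B$; the slack covers the ceiling.

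For the sparse class $\mathcal F_s$, I would take a union over supports. There are at most $\binom{W}{s}\le (eW/s)^s$ choices of nonzero pattern, and for each pattern the argument above applies with only $s$ free coordinates. The Lipschitz constant is unchanged, since it was computed with the full $W$ as a width bound. The grid then has $(2B/\eta)^s$ points, and adding $s\log(eW/s)$ for the union over supports gives the second display.

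The main obstacle is the Lipschitz constant: the activation-growth bound and the telescoping must be tracked so that the final exponent is at most $L+4$ with base $B(W+1)$. I would follow Schmidt-Hieber's Lemma~5 bookkeeping and, if needed, loosen intermediate constants into these extra powers.
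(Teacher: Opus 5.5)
Your proposal is correct and follows essentially the paper's route: a sup-norm Lipschitz bound for $\theta\mapsto f_\theta$ with base $\Gamma=B(W+1)$, a grid on $[-B,B]^W$ at resolution $\delta/\Gamma^{L+4}$, and a union over supports for $\mathcal F_s$. The one step to fix is absorbing $(L+1)W$ into extra powers of $\Gamma$. Using $L+1\le 2^L$ only gives $(L+1)W\Gamma^{L+1}\le\Gamma^{2L+2}$. Instead use $L+1\le W\le\Gamma$ (each of the $L+1$ affine layers carries at least one parameter), which gives $(L+1)W\le\Gamma^2$; the paper does the same via $L\le W\le\Gamma$.
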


\begin{proof}
Write $f_\theta$ for the network with parameter vector
$\theta\in[-B,B]^W$, with layer activations $a_\theta^{(0)}(x)=x$,
$a_\theta^{(\ell)}(x)=\mathrm{ReLU}\bigl(W^{(\ell)}_\theta a_\theta^{(\ell-1)}(x)+b^{(\ell)}_\theta\bigr)$
for $\ell=1,\ldots,L$, and $f_\theta(x)=W^{(L+1)}_\theta a_\theta^{(L)}(x)+b^{(L+1)}_\theta$.
Let $\Gamma:=B(W+1)\ge2$.

\emph{Layer bound.} We show $\|a_\theta^{(\ell)}(x)\|_\infty\le\Gamma^{\ell+1}$
for all $\ell=0,\ldots,L$ and $x\in[-1,1]^d$, by induction. The base case
$\ell=0$ holds since $\|x\|_\infty\le1\le\Gamma$. For the step, since the
network has at most $W$ total parameters, each unit has at most $W$ incoming
weights, so each coordinate of the pre-activation
$z^{(\ell)}=W^{(\ell)}_\theta a_\theta^{(\ell-1)}(x)+b^{(\ell)}_\theta$
satisfies $|z^{(\ell)}_k|\le BW\|a_\theta^{(\ell-1)}(x)\|_\infty+B \le
BW\Gamma^\ell+B\Gamma^\ell=\Gamma^{\ell+1}$ (using the inductive hypothesis
and $\Gamma^\ell\ge1$). Since $|\mathrm{ReLU}(t)|\le|t|$,
$\|a_\theta^{(\ell)}(x)\|_\infty\le\Gamma^{\ell+1}$, completing the induction;
the same bound gives $|f_\theta(x)|\le\Gamma^{L+2}$.

\emph{Lipschitz bound.} Fix $\theta,\theta'\in[-B,B]^W$ with
$\eta:=\|\theta-\theta'\|_\infty$, and let
$\eta_\ell:=\sup_x\|a_\theta^{(\ell)}(x)-a_{\theta'}^{(\ell)}(x)\|_\infty$, so
$\eta_0=0$. As above, each coordinate of $z^{(\ell)}-z'^{(\ell)}$ is a sum of
at most $W$ weight-difference terms, a weight-times-activation-difference
term, and a bias-difference term, so
\[
  |z^{(\ell)}_k-z'^{(\ell)}_k|
  \;\le\; BW\eta_{\ell-1} + W\eta\,\Gamma^\ell + \eta
  \;\le\; \Gamma\,\eta_{\ell-1} + \Gamma^{\ell+1}\eta,
\]
using $\|a_{\theta'}^{(\ell-1)}(x)\|_\infty\le\Gamma^\ell$ from the layer
bound applied to $\theta'$, $BW\le\Gamma$, and
$W\Gamma^\ell+1\le(W+1)\Gamma^\ell\le\Gamma^{\ell+1}$. Since ReLU is
$1$-Lipschitz, $\eta_\ell\le\Gamma\eta_{\ell-1}+\Gamma^{\ell+1}\eta$, and by
induction on $\ell$ (base case $\eta_0=0$), $\eta_\ell\le\ell\,\Gamma^{\ell+1}\eta$.
Applying the same one-step bound to the (linear) output layer,
\[
  \|f_\theta-f_{\theta'}\|_\infty
  \;\le\; \Gamma\,\eta_L + \Gamma^{L+2}\eta
  \;\le\; \Gamma\cdot L\Gamma^{L+1}\eta + \Gamma^{L+2}\eta
  \;\le\; (L+1)\Gamma^{L+2}\eta
  \;\le\; \Gamma^{L+4}\eta,
\]
using $L\le W\le\Gamma$ in the last step.

\emph{Covering argument.} By the Lipschitz bound, an $\eta$-net of the
parameter box $[-B,B]^W$ in $\|\cdot\|_\infty$, with
$\eta=\delta/\Gamma^{L+4}$, induces a $\delta$-net of
$\mathcal F(L,W,B)$ in sup norm over $x\in[-1,1]^d$. A standard grid
construction gives such an $\eta$-net of $[-B,B]^W$ of size at most
$(4B/\eta)^W$ (for $\eta\le B$), so
\[
  \mathcal N(\delta,\mathcal F(L,W,B),\|\cdot\|_\infty)
  \;\le\; \Bigl(\frac{4B\Gamma^{L+4}}{\delta}\Bigr)^{W},
\]
and taking logarithms and substituting $\Gamma=B(W+1)$ gives the first
bound. For the sparse class, fix a support of size at most $s$.
The same Lipschitz constant applies, while a parameter grid on that
support needs at most $(4B\Gamma^{L+4}/\delta)^s$ points. There are
at most $\sum_{j=0}^s\binom Wj\le(eW/s)^s$ such supports. Taking the
union of their grids and then logarithms gives the second bound.
A fixed coordinatewise clipping of the outputs does not enlarge either
covering number.
\end{proof}

\begin{lemma}[Least-squares oracle inequality for approximate ERM]
\label{lem:oracle-ineq}
Let $\mathcal F$ consist of functions $[-1,1]^d\to[-M,M]$, where
$M\ge1$, and suppose
$\log\mathcal N(\delta,\mathcal F,\|\cdot\|_\infty)
\le A\log(A_0/\delta)$ for $0<\delta\le M$, with $A\ge1$ and
$A_0\ge\max\{e,M\}$. Let $(x_i,Y_i)_{i=1}^n$ be i.i.d.,
$Y_i=f_0(x_i)+\epsilon_i$, $\|f_0\|_\infty\le M$, and suppose
$\E[\epsilon_i\mid x_i]=0$ and the errors are uniformly conditionally
sub-Gaussian with parameter $\sigma$. Suppose $\hat f\in\mathcal F$
satisfies
\[
 P_n(Y-\hat f)^2\le\inf_{f\in\mathcal F}P_n(Y-f)^2+\delta_n
\]
for a nonnegative, possibly data-dependent $\delta_n$. For $n\ge2$,
there is a universal constant $C$ such that, if $\E\delta_n<\infty$,
\[
  \E\|\hat f-f_0\|_{L^2(P)}^2
  \le C\inf_{f\in\mathcal F}\|f-f_0\|_\infty^2
    +C(M^2+\sigma^2)\frac{A\log(A_0n)}{n}
    +C\E\delta_n.
\]
\end{lemma}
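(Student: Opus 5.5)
The plan is the standard approximate-ERM argument: a basic inequality, a uniform deviation bound over a finite sup-norm cover of $\mathcal F$, and a fixed-point (peeling) step that absorbs cross terms into the left-hand side.

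\emph{Basic inequality.} Fix any comparator $f^*\in\mathcal F$ with $\|f^*-f_0\|_\infty\le 2\inf_{f\in\mathcal F}\|f-f_0\|_\infty$, and write $a:=\|f^*-f_0\|_\infty$. Substituting $Y=f_0+\epsilon$ into the approximate-ERM condition compared against $f^*$ gives
\[
 \|\hat f-f_0\|_n^2\le \|f^*-f_0\|_n^2+2P_n\{\epsilon(\hat f-f^*)\}+\delta_n ,
\]
where $\|\cdot\|_n$ denotes the empirical $L^2$ norm. The first term on the right is at most $a^2$. It remains to control the noise term and then to pass from $\|\cdot\|_n$ to $\|\cdot\|_{L^2(P)}$.

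\emph{Noise term.} Take a $\delta$-cover $\{f_j\}$ of $\mathcal F$ in sup norm with $\delta=M/n$, so the cover has at most $N$ elements with $\log N\le A\log(A_0n)$. Let $f_j$ be the element nearest $\hat f$. Conditional on the design, each centered average $P_n\{\epsilon(f_j-f^*)\}/\|f_j-f^*\|_n$ is sub-Gaussian with variance proxy $\sigma^2/n$. A union bound, taking expectations of the maximum, therefore gives
\[
 2P_n\{\epsilon(\hat f-f^*)\}\le 2\|\hat f-f^*\|_n\,\sigma\sqrt{2\log N/n}\cdot Z+2\delta P_n|\epsilon| ,
\]
with $\E Z^2$ bounded by a constant; the discretization term contributes $O(\sigma M/n)$. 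Applying $2uv\le u^2/4+4v^2$ and $\|\hat f-f^*\|_n\le\|\hat f-f_0\|_n+\|f^*-f_0\|_n$, the $\|\hat f-f_0\|_n^2$ part can be absorbed into the left-hand side. This yields
\[
 \E\|\hat f-f_0\|_n^2\lesssim a^2+\sigma^2 A\log(A_0n)/n+\E\delta_n .
\]

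\emph{Empirical to population norm.} The functions $(f-f_0)^2$ with $f\in\mathcal F$ are bounded by $4M^2$, and their variance is at most $4M^2\|f-f_0\|_{L^2(P)}^2$. Apply Bernstein's inequality to each element of the same cover, together with a union bound. This gives, uniformly over $f\in\mathcal F$,
\[
 \|f-f_0\|_{L^2(P)}^2\le 2\|f-f_0\|_n^2+CM^2(\log N+t)/n
\]
with probability at least $1-e^{-t}$. Integrating over $t$ produces the $M^2A\log(A_0n)/n$ term, and since $a^2\le 4\inf_{f\in\mathcal F}\|f-f_0\|_\infty^2$ the stated bound follows.

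\emph{Main obstacle.} The delicate step is the noise term. Because $\hat f$ depends on $\epsilon$, and $\delta_n$ is data-dependent, the sub-Gaussian maximal bound must be stated uniformly over the cover in a self-normalized form. A careless bound would give the slow rate $\sqrt{\log N/n}$ instead of the fast rate $\log N/n$. I would handle this by normalizing by $\max\{\|f_j-f^*\|_n,\sqrt{\log N/n}\}$, which is the usual one-shot fixed-point device. This avoids full peeling and still keeps $\E\delta_n$ entering only additively.
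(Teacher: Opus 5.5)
Your proposal is correct and follows essentially the route the paper points to, namely the \citet[Lemma~4]{SchmidtHieber2020DNN} argument adapted to conditional sub-Gaussian noise: basic inequality against a deterministic comparator, a sup-norm cover at resolution $M/n$, a self-normalized maximal inequality with Young's inequality to absorb the cross term, and a Bernstein-plus-union-bound transfer from $\|\cdot\|_n$ to $L^2(P)$. The extra normalization by $\max\{\|f_j-f^*\|_n,\sqrt{\log N/n}\}$ in your last paragraph is harmless but unnecessary: normalizing by $\|f_j-f^*\|_n$ alone already gives $\E\max_j\xi_j^2\lesssim\sigma^2\log(2N)/n$ and hence the fast rate, provided you track $\|f_j-f^*\|_n\le\|\hat f-f^*\|_n+M/n$.
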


This is the localized least-squares oracle-inequality argument adapted to
conditional sub-Gaussian errors; it is not a direct restatement of the
Gaussian-noise result of \citet[Lemma~4]{SchmidtHieber2020DNN} (see also
\citealp[Ch.~11]{GyorfiKohlerKrzyzakWalk2002}). The corresponding
high-probability approximate-ERM bound has $\delta_n$ in place of
$\E\delta_n$ and yields the $O_P$ rate used in the theorem without an
expectation assumption on $\delta_n$. A fixed pointwise clipping map is
$1$-Lipschitz and therefore cannot increase a class's sup-norm covering
number. It does not increase approximation error only if the target lies
inside the clipping interval. Outcome predictions are clipped to $[-M,M]$;
the clipping thresholds for $e$ and $r$ are chosen within their respective
true overlap bounds.

For Bernoulli cross-entropy, let $p_0(v)=\Pr(Y=1\mid V=v)$ and suppose
both $p_0(v)$ and $p(v)$ lie in $[\epsilon,1-\epsilon]$. Then
\[
 \E\{\ell_{\rm ce}(Y,p(V))-\ell_{\rm ce}(Y,p_0(V))\}
 =\E\!\left[\operatorname{KL}\{\operatorname{Bern}(p_0(V))\|\operatorname{Bern}(p(V))\}\right]
 \ge 2\|p-p_0\|_{L^2(P)}^2,
\]
where the inequality is Pinsker's inequality. Pinsker alone does not
establish an estimation rate. On this clipping interval, cross-entropy
losses are bounded and Lipschitz, and their excess losses satisfy a
Bernstein condition. Localizing their entropy bound gives, for an
approximate empirical minimizer over a class $\mathcal P$ with the entropy
bound of Lemma~\ref{lem:oracle-ineq},
\[
 \|\hat p-p_0\|_{L^2(P)}^2
 =O_P\!\left(\inf_{p\in\mathcal P}\|p-p_0\|_\infty^2
 +\frac{A\log(A_0n)}{n}+\delta_n\right),
\]
with constants depending on $\epsilon$. The sup-norm approximation term
is quadratic because the Bernoulli KL divergence has bounded curvature
on this interval.

The five directly supervised heads fit observed labels: $e$ uses $W$,
$r(w)$ uses $R$ in arm $w$, and $\tilde\mu(w)$ uses $Y$ among labeled
units in arm $w$. The marginal target $\mu_0(w,X)$ is never observed.
Its fit instead uses predictions from the final outcome head on the same
training fold and with a learned baseline encoder. The next lemma controls
this generated-regressor step in terms of the empirical gap
$\delta_{\mu(w),n}$ defined in~\eqref{eq:marginal-block-erm}.
 
\paragraph{The Marginal Head: A Generated-Regressor Lemma. }
The marginal head $\hat\mu(w,\cdot)$ targets
$\mu_0(w,x)=\E[\tilde\mu_0(w,X,S)\mid W{=}w,X{=}x]$, which is itself a
functional of $\tilde\mu_0$: fitting $\hat\mu(w,\cdot)$ is a generated-regressor
problem, with pseudo-outcome $\hat{\tilde\mu}(w,X_i,S_i)$ rather than the true
$\tilde\mu_0(w,X_i,S_i)$. The training loss~\eqref{eq:training-loss}
stop-gradients this pseudo-outcome, but the two heads still use the same
training-fold observations. The uniform bound below handles their
dependence over learned-encoder composite classes.

\begin{lemma}[Learned-encoder same-fold marginal rate]
\label{lem:same-fold-marginal}
Fix $w\in\{0,1\}$ and write $\mathcal Q_{m,w}$ for the bounded,
clipped composite outcome class $(x,s)\mapsto q_w(b(x),g(b(x),s))$;
let $\mathcal F_{m,w}$ be the composite marginal class defined before
Theorem~\ref{thm:composite-direct-rate}. The jointly fitted $\hat q$ and $\hat f$ 
belong to these classes. Let $P_w$ denote expectation given $W=w$.
Suppose their sup-norm covering numbers satisfy
$T_n:=\log n+\log\mathcal N(1/n,\mathcal Q_{m,w},\|\cdot\|_\infty)
+\log\mathcal N(1/n,\mathcal F_{m,w},\|\cdot\|_\infty)$.
For the gap $\delta_{\mu(w),n}$ in~\eqref{eq:marginal-block-erm},
\begin{equation}
 \rho_{\mu(w,\cdot)}^2
 =O_P\!\left(\inf_{f\in\mathcal F_{m,w}}\|f-\mu_0(w,\cdot)\|_\infty^2
 +\frac{T_n}{n}+\delta_{\mu(w),n}
 +\rho_{\tilde\mu(w,\cdot)}^2\right).
 \label{eq:same-fold-marginal-oracle}
\end{equation}
\end{lemma}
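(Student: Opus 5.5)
We first decompose the error at the population level. Fix $w$ and write $q_0(x,s)=\tilde\mu_0(w,x,s)$, $\mu_0=\mu_0(w,\cdot)$, and $\bar q(x)=P_w[\hat q(X,S)\mid X=x]$ for the conditional mean of the fitted outcome head given baseline covariates in arm $w$, treating $\hat q$ as a fixed function. By the definition of $\mu_0$ and Jensen,
\[
\|\bar q-\mu_0\|_{L^2(P_w)}\le\|\hat q-q_0\|_{L^2(P_w)}=\rho_{\tilde\mu(w,\cdot)} .
\]
By the triangle inequality, $\rho_{\mu(w)}\lesssim\|\hat f-\bar q\|_{L^2(P_w)}+\rho_{\tilde\mu(w)}$. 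The norm change from $P_w$ to $P$ for $X$-only functions costs only a constant, since $e_0\in(\epsilon,1-\epsilon)$ makes the density of $X\mid W=w$ relative to $P_X$ bounded above and below. It therefore remains to bound the projection error $\|\hat f-\bar q\|^2_{L^2(P_w)}$.

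The key identity is the Pythagorean decomposition: for any $f=f(X)$,
\[
P_w(\hat q-f)^2-P_w(\hat q-\bar q)^2=\|f-\bar q\|^2_{L^2(P_w)} .
\]
Hence the population excess risk of $f$ against the generated target $\hat q$ equals the projection error we want. The approximate-ERM property in~\eqref{eq:marginal-block-erm} controls this risk empirically,
\[
P_{n,w}(\hat q-\hat f)^2\le P_{n,w}(\hat q-f^\dagger)^2+\delta_{\mu(w),n},
\]
where $f^\dagger\in\mathcal F_{m,w}$ is a sup-norm near-best approximant of $\mu_0$. Converting empirical to population excess risk then gives
\[
\|\hat f-\bar q\|^2\le\|f^\dagger-\bar q\|^2+\delta_{\mu(w),n}+\text{(empirical process term)} .
\]
The comparator term satisfies $\|f^\dagger-\bar q\|^2\le2\|f^\dagger-\mu_0\|_\infty^2+2\rho^2_{\tilde\mu(w)}$, which produces the approximation and $\rho_{\tilde\mu}^2$ terms in~\eqref{eq:same-fold-marginal-oracle}.

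The main obstacle is the empirical process term. Because $\hat q$ is fitted on the same fold, we cannot condition on it. We therefore bound uniformly over pairs $(q,f)\in\mathcal Q_{m,w}\times\mathcal F_{m,w}$ the centered process
\[
(P_{n,w}-P_w)\bigl[(q-f)^2-(q-f_q^\dagger)^2\bigr],
\]
where the fixed comparator $f^\dagger$ enters through the same class. We localize this process using the fact that its variance is bounded by a multiple of $\|f-\bar q\|^2+\|f^\dagger-\bar q\|^2$, by boundedness (clipping to $[-M,M]$) and the Pythagorean identity. The approach is to take $1/n$-nets of both classes in sup norm, where discretization error is $O(1/n)$ since the functions are bounded. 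We then apply Bernstein's inequality to each pair in the product net with a union bound over $\exp(T_n)$ pairs, and peel over shells of $\|f-\bar q\|$. This yields, with probability tending to one,
\[
\text{process}\le\tfrac12\|\hat f-\bar q\|^2+C\bigl(\|f^\dagger-\bar q\|^2+T_n/n\bigr).
\]

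Absorbing the $\tfrac12\|\hat f-\bar q\|^2$ term into the left-hand side and combining with the first step gives~\eqref{eq:same-fold-marginal-oracle}. One technical point needs care: $\bar q$ depends on $q$ and need not lie in $\mathcal F_{m,w}$. The localization must therefore be stated relative to $\bar q$ rather than to a class element, which works because the Pythagorean identity holds for every $X$-measurable $f$. A second point is that the random arm sample size $n_w$ is of order $n$ with probability tending to one by overlap, so replacing $n_w$ by $n$ only affects constants.
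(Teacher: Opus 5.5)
Your proposal is correct and follows essentially the same argument as the paper. Both use a deterministic comparator in $\mathcal F_{m,w}$, Bernstein with a union bound over a $1/n$ product cover of $\mathcal Q_{m,w}\times\mathcal F_{m,w}$ (this is what handles the same-fold dependence), Young's inequality to absorb the localized term, and the empirical gap $\delta_{\mu(w),n}$. The differences are cosmetic. You center at $\bar q=P_w[\hat q\mid X]$ and use the exact Pythagorean identity plus Jensen, whereas the paper expands around $\mu_0$ via $q_0=\mu_0+\xi$ and bounds the cross term $2P_w\{e(f^*-\hat f)\}$ by Cauchy--Schwarz and Young. Your peeling step is also unnecessary, since per-element Bernstein on the finite net is already variance-adaptive.
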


\begin{proof}
Write $q_0=\tilde\mu_0(w,\cdot)$ and
$q_0(X,S)=\mu_0(w,X)+\xi$, where $\E[\xi\mid W=w,X]=0$.
Choose a deterministic comparator $f^*\in\mathcal F_{m,w}$ and set
$a=\|f^*-\mu_0(w,\cdot)\|_{L^2(P_w)}$. For
$q\in\mathcal Q_{m,w}$ and $f\in\mathcal F_{m,w}$, set
\[
 D_{q,f}:=(q-f)^2-(q-f^*)^2.
\]
Because all functions are clipped, $D_{q,f}$ is bounded and
$\operatorname{Var}_w(D_{q,f})\le C\,P_w(f-f^*)^2$. A product cover of
$\mathcal Q_{m,w}\times\mathcal F_{m,w}$ at sup-norm resolution $1/n$
has log cardinality at most $T_n$.
For a fixed element of that cover, Bernstein's inequality with a union
bound over the cover gives, simultaneously and with probability tending
to one,
\[
 |(P_w-P_{n,w})D_{q,f}|
 \le C\sqrt{P_w(f-f^*)^2\,T_n/n}+CT_n/n.
\]
Here $n_w\asymp n$ by treatment overlap. The squared-loss difference is
Lipschitz in $(q,f)$ on the clipped range, so passing from the cover to
the full class adds $O(1/n)$. Applying Young's inequality to the square-root
term gives
\begin{equation}
 \left|(P_w-P_{n,w})D_{q,f}\right|
 \le \tfrac18P_w(f-f^*)^2+
 C\frac{T_n}{n}
 \label{eq:joint-localized-bound}
\end{equation}
simultaneously for all $q,f$, with probability tending to one. This step is
uniform over the composite classes, so it applies to $\hat q$ and $\hat f$
even when their encoders were fitted on these observations.

Apply~\eqref{eq:joint-localized-bound} and
\eqref{eq:marginal-block-erm} with $q=\hat q,f=\hat f$. Put
$e=\hat q-q_0$. Since $\E[\xi\mid W=w,X]=0$,
\[
 P_w\{(\hat q-\hat f)^2-(\hat q-f^*)^2\}
 =P_w(\hat f-\mu_0)^2-a^2+2P_w\{e(f^*-\hat f)\}
 \ge \tfrac34P_w(\hat f-\mu_0)^2-Ca^2-C\|e\|_{L^2(P_w)}^2,
\]
where Cauchy--Schwarz and Young's inequality bound the term involving $e$.
Also $P_w(\hat f-f^*)^2\le 2P_w(\hat f-\mu_0)^2+2a^2$.
Combining these bounds with the empirical excess risk and absorbing the
$\tfrac18P_w(\hat f-f^*)^2$ term yields
$P_w(\hat f-\mu_0)^2=O_P(a^2+T_n/n+\delta_{\mu(w),n}
+\|e\|_{L^2(P_w)}^2)$. Minimize over deterministic $f^*$ and use
treatment overlap to transfer the $X$-only marginal-head error from
$P_w$ to $P$:
$P(\hat f-\mu_0)^2\le\Pr(W=w)P_w(\hat f-\mu_0)^2/\epsilon$.
Keep the outcome-head error under $P_w$, as in the definition of
$\rho_{\tilde\mu(w,\cdot)}$. This proves~\eqref{eq:same-fold-marginal-oracle}.
The comparator may use a baseline encoder different from the fitted one because it belongs
to the full class $\mathcal F_{m,w}$, while $\hat q$ remains fixed in
the empirical comparison.
\end{proof}

Outer cross-fitting separates nuisance fitting from score evaluation; it
does not separate the outcome and marginal fits on the same training fold.
The uniform bound handles that dependence, while the assumed final-fit
optimization comparison supplies the otherwise missing marginal comparator.

\subsection{Proof of Proposition \ref{prop:score-robustness}}

\begin{proof}
Fix $w\in\{0,1\}$, and abbreviate
$q_w^\star(X,S)=\tilde\mu^\star(w,X,S)$ and
$q_{0w}(X,S)=\tilde\mu_0(w,X,S)$. Conditional MAR gives
\[
 \E\{R(Y-q_w^\star)\mid W=w,X,S\}
 =r_0(w,X,S)(q_{0w}-q_w^\star).
\]
Indeed, $\E(Y\mid W=w,X,S,R=1)=q_{0w}$ by definition, and
$\E(RY\mid W=w,X,S)=r_0q_{0w}$. Thus the conditional mean of the
arm-$w$ correction in the score, given $X$, is
\begin{align*}
 &\E\left[\left.
 \frac{\ind{W=w}}{e_w^\star(X)}
 \left\{\frac{R(Y-q_w^\star)}{r^\star(w,X,S)}
   +q_w^\star-\mu^\star(w,X)\right\}\right|X\right]\\
 &\quad=\frac{e_{0w}(X)}{e_w^\star(X)}
 \E\left[\left.
 \frac{r_0(w,X,S)}{r^\star(w,X,S)}(q_{0w}-q_w^\star)
   +q_w^\star-\mu^\star(w,X)
 \right|W=w,X\right].
\end{align*}
Write $a_w(X)=e_{0w}(X)/e_w^\star(X)$. Since
$\mu_{0w}(X)=\E(q_{0w}\mid W=w,X)$, the sum of
$\mu^\star(w,X)$ and this correction, minus $\mu_{0w}(X)$, equals
\begin{align*}
 &\mu^\star(w,X)-\mu_{0w}(X)
 +a_w\E\left[\left.
 \frac{r_0}{r^\star}(q_{0w}-q_w^\star)
   +q_w^\star-\mu^\star(w,X)\right|W=w,X\right]\\
 &\quad=(1-a_w)\{\mu^\star(w,X)-\mu_{0w}(X)\}
 +a_w\E\left[\left.
 \left(\frac{r_0}{r^\star}-1\right)(q_{0w}-q_w^\star)
 \right|W=w,X\right].
\end{align*}
Here $r_0/r^\star$ abbreviates the ratio evaluated at $(w,X,S)$.
Under ignorability and conditional MAR,
$\tau_0=\E\{\mu_{01}(X)-\mu_{00}(X)\}$. Taking expectations of the
last display and subtracting the control-arm expression from the treated-arm
expression proves~\eqref{eq:score-bias}. In each arm, either equality in
the first displayed pair makes its first product zero, and either equality
in the second pair makes its second product zero, proving the sufficient
condition.
\end{proof}

\subsection{Proof of Theorem~\ref{thm:composite-direct-rate}}

\begin{proof}
Write $s=s_{\mathrm{comp}}$ and take $m$ sufficiently large. Apply the bounded-weight
approximation construction of \citet[Theorem~5]{SchmidtHieber2020DNN}
to each coordinate of $b_0$, each coordinate of $g_0$, and all seven
representation-level heads. For every component $v$ of input dimension
$D_v$ and smoothness $p_v$, use grid size $N=m$ and accuracy parameter
$t=\lceil(1+s)\log_2m\rceil$. Since $p_v/D_v\ge s$, the construction
gives a sup-norm error bounded by
\[
 C\{m2^{-t}+m^{-p_v/D_v}\}=O(m^{-s})
\]
with $O(\log m)$ depth and $O(m\log m)$ active parameters. The
coordinate changes from $[-1,1]$ to $[0,1]$ only affect constants.
Clip the approximated encoder coordinates to $[-1,1]$, the outcome heads
to $[-M,M]$, and the probability heads to intervals within the true
overlap bounds. Since the corresponding true components lie in these
intervals, clipping cannot increase their approximation errors. The true encoder and head maps are Lipschitz
on their compact domains because their Sobolev orders are at least one.
For example, with $\hat b_m,\hat g_m,\hat q_{w,m}$ denoting the
approximants of $b_0,g_0,\tilde\mu_0(w,\cdot)$, respectively,
\begin{align*}
 &\|\hat q_{w,m}(\hat b_m(X),\hat g_m(\hat b_m(X),S))
   -\tilde\mu_0(w,X,S)\|_\infty\\
 &\quad\le \|\hat q_{w,m}-\bar{\tilde\mu}_0(w,\cdot)\|_\infty
  +C\|\hat g_m-g_0\|_\infty+C\|\hat b_m-b_0\|_\infty
  =O(m^{-s}).
\end{align*}
The same triangle inequality applies to the other six composite heads.
This constructs one shared encoder with all heads simultaneously close to
their true composite nuisances; it does not claim that fitted encoder
coordinates converge to these particular approximants. Because the number
of components is fixed, parallelizing the component constructions and
synchronizing their depths with bounded-weight ReLU identity modules yields
a common layered architecture on $(X,S)$
with $O(\log m)$ depth and width $O(m)$ containing joint approximants
with $O(m\log m)$ nonzero bounded weights and sup-norm error $O(m^{-s})$
for each output. Restrict the joint sieve to at most $O(m\log m)$
nonzero parameters in this architecture; the support may vary across
functions. It has $W=O(m^2\log m)$ available parameter slots, but only
$s_m=O(m\log m)$ of them may be nonzero.

Apply the sparse bound of Lemma~\ref{lem:covering-number} on the bounded
original input domain with $L=O(\log m)$, $W=O(m^2\log m)$,
$s_m=O(m\log m)$, fixed parameter bound $B$, and $\delta=1/n$.
For polynomial $m$, it gives log covering number $O(m(\log n)^3)$
for each composite head. A product cover of the five supervised heads
has the same order because their number is fixed. The two head covers in
Lemma~\ref{lem:same-fold-marginal} also have this order.

To apply the joint comparison, let $\mathcal R_{\mathrm{sup}}$ be the
population version of $\widehat{\mathcal R}_{\mathrm{sup}}$ in
\eqref{eq:joint-supervised-gap}: its outcome term is conditional on
$R=1$, while its propensity terms use the full observed-data law. For
any $\theta\in\Theta_n$, the regression and Bernoulli risk identities give
\begin{align*}
 &\mathcal R_{\mathrm{sup}}(\theta)
      -\mathcal R_{\mathrm{sup}}(\boldsymbol\eta_0)\\
 &=\lambda_Y\E[(\tilde\mu_\theta(W,X,S)
                     -\tilde\mu_0(W,X,S))^2\mid R=1]\\
 &\quad+\lambda_R\E\!\left[
   \operatorname{KL}\{\operatorname{Bern}(r_0(W,X,S))\|
                         \operatorname{Bern}(r_\theta(W,X,S))\}\right]\\
 &\quad+\lambda_W\E\!\left[
   \operatorname{KL}\{\operatorname{Bern}(e_0(X))\|
                         \operatorname{Bern}(e_\theta(X))\}\right].
\end{align*}
Here $\mathcal R_{\mathrm{sup}}(\boldsymbol\eta_0)$ denotes the risk
at the true nuisance vector, whether or not that vector belongs to the
sieve. Fixed positive weights, Pinsker's inequality, and treatment and
labeling positivity imply that this excess risk bounds, up to a constant,
\[
 \|e_\theta-e_0\|_{L^2(P)}^2+
 \sum_{w=0}^1\left\{
 \|r_\theta(w)-r_0(w)\|_{L^2(P_w)}^2+
 \|\tilde\mu_\theta(w)-\tilde\mu_0(w)\|_{L^2(P_w)}^2\right\}.
\]
In particular, for the outcome terms the lower bound on $r_0(w,X,S)$
transfers squared error from the labeled distribution to $P_w$.
No overlap between the two arm-specific surrogate distributions is used.

The common approximant constructed above has population supervised
excess risk $O(m^{-2s})$: the squared-loss term is quadratic, and
Bernoulli KL has bounded curvature on the clipping interval. For the
joint class, the localized squared-loss and bounded cross-entropy
arguments of Lemma~\ref{lem:oracle-ineq} apply to the weighted sum of
loss differences. Its variance is bounded by a constant times the
population supervised excess risk, and the product-cover entropy is
$O(m(\log n)^3)$. The labeled sample count is of order $n$ in
probability by positivity; conditioning on this count handles the
normalization of the labeled squared-loss term. Writing
$\Delta(\theta)=\mathcal R_{\mathrm{sup}}(\theta)
-\mathcal R_{\mathrm{sup}}(\boldsymbol\eta_0)$, Bernstein's inequality,
a union bound on the joint cover, and localization give, uniformly over
$\theta\in\Theta_n$, with probability tending to one,
\[
 \left|\{\widehat{\mathcal R}_{\mathrm{sup}}(\theta)
       -\widehat{\mathcal R}_{\mathrm{sup}}(\boldsymbol\eta_0)\}
       -\Delta(\theta)\right|
 \le \tfrac14\Delta(\theta)+C\frac{m(\log n)^3}{n}.
\]
The empirical risks at $\boldsymbol\eta_0$ are well defined even though
the true vector need not belong to $\Theta_n$. Apply this inequality
both to $\hat\theta$ and to the common approximant, using the gap
definition~\eqref{eq:joint-supervised-gap}, to obtain
\[
 \mathcal R_{\mathrm{sup}}(\hat\theta)
      -\mathcal R_{\mathrm{sup}}(\boldsymbol\eta_0)
 =O_P\!\left(m^{-2s}+\frac{m(\log n)^3}{n}
                  +\delta_{\mathrm{sup},n}\right).
\]
Thus the sum of squared errors of the five supervised nuisances has
this bound, without asserting separate empirical optimality for any
supervised head.

For each $w$, the same joint-sieve construction supplies a deterministic
$f^*\in\mathcal F_{m,w}$ with
$\|f^*-\mu_0(w,\cdot)\|_\infty=O(m^{-s})$; this candidate may use an
encoder different from the fitted one. Lemma~\ref{lem:covering-number}
applied to the composite outcome and marginal classes on $(X,S)$ gives
$T_n=O(m(\log n)^3)$. Lemma~\ref{lem:same-fold-marginal} and the
already established outcome-head rate thus imply
\[
 \rho_{\mu(w,\cdot)}^2
 =O_P\!\left(m^{-2s}+\frac{m(\log n)^3}{n}
 +\delta_{\mu(w),n}+\delta_{\mathrm{sup},n}\right).
\]
With $m\asymp(n/(\log n)^3)^{1/(2s+1)}$ and $s>1/2$, both the
approximation and entropy terms are $o(n^{-1/2})$. If the two
optimization gaps are also $o_P(n^{-1/2})$, all seven nuisance errors
are $o_P(n^{-1/4})$, so the product-rate requirements of
Theorem~\ref{thm:km-oracle} follow. Its inference conclusions require
its additional statistical conditions, including $V^*>0$ for exact
Wald coverage.
\end{proof}

\end{document}